\documentclass{article}
\usepackage{iclr2027_conference,times}

\usepackage[T1]{fontenc}
\usepackage{graphicx,colordvi,psfrag}
\usepackage{amsmath,amssymb,algpseudocode}
\usepackage{algorithm}
\usepackage{algorithmicx}
\usepackage{calc,pstricks, pgf, xcolor}
\usepackage{bbding}
\usepackage{bbm}
\usepackage{dsfont}
\usepackage{stfloats}
\usepackage{hyperref}
\usepackage{xparse}
\usepackage{enumerate}
\usepackage{tikz}
\usetikzlibrary{shapes,arrows,positioning,calc}

\usepackage{subcaption}

\newcommand{\ZZ}{\mathbb{Z}}
\newcommand{\RR}{\mathbb{R}}

\newcommand{\diag}{\mathop{\mathrm{diag}}}

\newcommand{\trace}{\mathop{\mathrm{tr}}}
\def\tr{\trace}

\newcommand{\Unif}{\mathrm{Uniform}}

\def\EE{\mathbb{E}}
\newcommand{\m}{\mathcal}

\newcommand{\FP}{\mathrm{FP}}

\newcommand{\Qt}{\tilde{Q}_{\FP 4}}
\newcommand{\Cscale}{\mathcal{C}_{\mathrm{scale}}}
\newcommand{\Cfp}{\mathcal{C}_{\mathrm{FP4}}}

\NewDocumentCommand{\DIP}{e{^_}}{D^{\mathrm{IP}\IfValueT{#1}{,#1}}_{\IfValueT{#2}{#2}}}

\NewDocumentCommand{\DMM}{e{^_}}{D^{\mathrm{MM}\IfValueT{#1}{,#1}}_{\IfValueT{#2}{#2}}}

\DeclareMathOperator*{\argmin}{\arg\!\min}

\newtheorem{proposition}{Proposition}

\newtheorem{lemma}{Lemma}

\newenvironment{proof}[1][Proof]{\noindent\textbf{#1.} }{\ \rule{0.5em}{0.5em}}

\usepackage{hyperref}
\usepackage{url,booktabs}
\usepackage{amsmath}

\title{OSFP4: Joint Optimization of Diagonal Smoothing and Block Scales for NVFP4 Quantization}

\author{Neriah Ben David, Ori Meir \& Or Ordentlich\thanks{The authors are listed in alphabetical order. } \\
School of Computer Science and Engineering\\
Hebrew University of Jerusalem\\
Jerusalem, Israel\\
\texttt{\{neriah.bendavid,ori.meir,or.ordentlich\}@mail.huji.ac.il}
}

\iclrfinalcopy

\begin{document}

\maketitle
\fancyhead{}
\renewcommand{\headrulewidth}{0pt}

\begin{abstract}
NVFP4 is an attractive datatype for large language model (LLM) inference, offering compact storage and native tensor-core acceleration. However, preserving accuracy using NVFP4 requires careful quantization.
In this work we develop a novel quantization scheme called Optimized Smoothing and Scaling for NVFP4 (OSFP4). For each linear projection it uses a diagonal smoothing matrix whose entries are optimized to minimize the squared matrix-product quantization error under NVFP4, taking into account the rounding procedure that is used (either round-to-nearest, or GPTQ-style successive interference cancellation). This requires performing joint optimization on the smoothing entries as well as the block scales, which is facilitated by analyzing a multiplicative-dither FP4 quantizer instead of the fixed deterministic one. Experiments show that OSFP4 achieves the highest average accuracy among the evaluated competitors in the corresponding quantization settings, while retaining approximately 94-97\% of vendor NVFP4 prefill throughput on the measured workloads. Our code is available \href{https://github.com/neriahbd/OSFP4}{here}.
\end{abstract}

\section{Introduction}

Low-bit data formats are gaining much popularity for large language models (LLMs) inference, as they require less memory for storing and running the model, decrease the traffic from and to memory, and allow to execute matrix multiplication (MatMul) faster through optimized hardware. The FP4 datatype stands out as a prominent representative of such formats which is heavily used in models running on NVIDIA's Blackwell series and its subsequent generations. Unfortunately, using FP4 instead of higher resolution formats such as FP8 often comes with the cost of reduced accuracy. The goal of this paper is to develop a quantization scheme for NVFP4 that jointly optimizes a \emph{diagonal smoothing transformation, as well as the weight block scales} in order to maintain maximal accuracy.

At first thought, it is not at all clear that diagonal smoothing is useful for FP4 quantization. Traditionally, the main differentiator between INT constellations and FP constellations was the dynamic range. An INT-$R$ constellation has fixed spacing between constellation points, and its dynamic range is therefore proportional to $2^{R}$. Absmax scaling is often used to make sure the input fits within the dynamic range, and the corresponding per-entry mean-squared error (MSE) in representing $x\in\RR^n$ in INT-$R$ is consequently proportional to $\|x\|^2_\infty\cdot 2^{-2R}$~\citet{OP26surveyI}. For this reason, one of the key aspects in INT-based quantization techniques is controlling the infinity norm of the quantizer's input. Diagonal scaling can help balance the infinity norms of vectors participating in MatMul, and indeed one of the simplest and most efficient techniques for INT-quantized MatMul is SmoothQuant~\citet{xiao2023smoothquant}.

For FP constellations with $E$ exponent bits, the dynamic range is proportional to $2^{2^E}$, and can be made ``essentially unbounded'' with as few as $E=4$ exponent bits, as in the standard E4M3 FP8 datatype. The MSE in representing the $i$th entry $x_i$ of $x\in\RR^n$ is therefore proportional to $|x_i|^2\cdot 2^{-2M}$, where $M$ is the number of mantissa bits, provided that $x_i$ is inside the constellation's dynamic range~\cite{OP26surveyI}. For this reason, quantization to FP is typically considered simpler, and requires less pre and post processing than quantization to INT constellations. In particular, whenever $E$ is large enough so that dynamic range is not a limiting factor, SmoothQuant-style diagonal scaling is quite useless, as $\EE[(Q_{\FP}(X_i\cdot \alpha)Q_{\FP}(Y_i/\alpha)-X_i Y_i)^2]$ is proportional to $\EE[|X_i|^2|Y_i|^2] 2^{-2M}$ for all $\alpha>0$ provided that the distribution of $(X_i,Y_i)$ is ``sufficiently smooth'' and $X_i\cdot \alpha$ and $Y_i/\alpha$ are within the quantizer's dynamic range with high probability.

The picture is different for the FP4 E2M1 format since with $E=2$ the dynamic range is too small to be ignored. To combat the effect of limited dynamic range, FP4 is usually applied with microscaling as in MXFP4 or NVFP4, where a group of $g$ entries are also given a common scale $\gamma$. In MXFP4 $g=32$ and $\gamma$ is represented in E8M0 format, whereas in NVFP4 $g=16$ and $\gamma$ is represented in E4M3 format. Here we focus on NVFP4, which is becoming a common choice for models running on NVIDIA hardware, though some of the techniques we develop may also be useful for MXFP4 as well.

The scale $\gamma$ can be used to ensure that the input to the FP4 quantizer never exceeds $6$, which is the largest value in the FP4 constellation. In particular, for $x\in\RR^{16}$ the default absmax-to-6 rule sets $\gamma\approx \|x\|_{\infty}/6 $ such that $\|\gamma^{-1}x\|_\infty\approx 6$. This rule takes care of the upper limit of the dynamic range: it ensures that no entry of $\gamma^{-1}x$ is large enough to saturate the quantizer. However, with this rule all entries with magnitude $24$ times smaller than $\|x\|_\infty$ will be quantized to 0, so that the total distortion can be quite large when $x$ is largely unbalanced. Furthermore, even for balanced vectors, there are multiple choices of $\gamma$ for which all entries of $\gamma^{-1} x$ are within the dynamic range of the FP4 constellation, and careful optimization of $\gamma$ may yield significant reduction of MSE over the absmax-to-6 rule. 

This work develops an NVFP4 quantization method that utilizes a diagonal matrix $A\in\RR^{n\times n}$ for ``smoothing'' the weight matrix $W\in\RR^{m\times n}$ to $WA^{-1}$ and the activation matrix $X\in\RR^{n\times k}$ to $AX$, as in SmoothQuant. The clear advantage of diagonal $A$ with respect to more general structures for $A$, e.g. block-rotation~\cite{microscaling}, PCA directions aligning~\cite{chen2025wush}, tensor of dense small matrices~\cite{sun2025flatquant}, is that diagonal pre-processing is cheap and can easily be done in runtime for the activations. In fact, in some layers it can be fused into the LayerNorm or RMSnorm.

In contrast to SmoothQuant where $A$ is used for balancing infinity norms, the role of the matrix $A$ here is more intricate. It should balance the distribution of all entries within the same groups ($16$ consecutive entries) in the rows of $W$ and in the columns of $X$, so that the corresponding E4M3 scales can then be used to minimize the MSE. Developing a mathematical formulation that accurately captures this objective is one of the main challenges addressed in this work.  This is achieved by analyzing a randomized multiplicatively dithered FP4 quantizer instead of the deterministic one, which is actually used by our scheme. The distortion incurred by the randomized quantizer admits a relatively simple form using the function $\phi(\cdot)$ introduced below (See Figure~\ref{fig:phi-fp4}), and is closely related to the distortion incurred by the deterministic quantizer (Lemma~\ref{lem:randFP4props}). This in turn enables to jointly optimize the smoothing weight matrix and the block scales.

Using the fixed weights and calibration activations, we optimize the diagonal entries of \(A\) and the block scales offline in two stages. In the first stage we jointly optimize the diagonal entries of $A$ and the block scales with respect to a dithered FP4 quantizer, whose corresponding MSE loss is smooth and amenable to optimization. Crucially, the loss takes into account whether round-to-nearest (RTN) or successive-interference-cancellation (SIC) will be used for the actual weight quantization. The resulting $A$ is retained, while the weight scales are finely optimized over the E4M3 grid in the second stage. Activation scales must instead be determined in real time; our default is absmax-to-6.

Our main contributions are:
\begin{itemize}
    \item We develop an expression that approximates the expected squared Frobenius norm of the matrix-product error under randomized FP4 quantization for diagonally smoothed NVFP4 quantization, as a function of the smoothing parameters and the group scales. This expression takes into account the rounding procedure (RTN/SIC) for $W$, and is amenable to efficient joint optimization of all participating parameters.
    \item Relying on this approximation, we develop a quantization scheme called Optimized Smoothing and Scaling for NVFP4 (OSFP4) quantization. This scheme uses the approximation to perform joint continuous optimization of diagonal smoothing and block scales, followed by discrete E4M3 weight-scale selection. The optimization criterion for the discrete weight-scale depends on the calibration second-moments as well as on the rounding method (RTN/SIC).
    The activation group scales are set using absmax-to-6. 
    \item Experiments show that OSFP4 achieves the highest average accuracy among the evaluated competitors in the corresponding quantization settings, while retaining approximately 94-97\% of vendor NVFP4 prefill throughput on the measured workloads.
    \item We provide an \href{https://github.com/neriahbd/OSFP4}{add-on package for LLM Compressor}~\cite{llmcompressor2024} that implements our OSFP4 compression method and exports quantized models as vLLM checkpoints, together with a \href{https://github.com/neriahbd/vllm-osfp4}{vLLM plugin} that support the additional diagonal smoothing layers.
\end{itemize}

\subsection{Related Work}

\paragraph{Calibration and error compensation.}
Post-training quantization uses calibration data to preserve layer outputs without retraining the original model. GPTQ~\citep{frantar2023gptq} uses second-order information to compensate quantization errors through updates to unquantized weights. This method has demonstrated empirical benefits for various quantization codebooks, including for FP and in particular NVFP4 constellations~\cite{microscaling}.
Diagonal scaling has proved useful for INT constellations: SmoothQuant~\citep{xiao2023smoothquant} balances the quantization difficulty of weights and activations by redistributing their magnitudes, while AWQ~\citep{lin2024awq} uses activation statistics to scale and protect important weight channels. OmniQuant~\citep{shao2024omniquant} learns such scalings as part of a broader family of equivalent transformations, jointly with clipping parameters, through reconstruction objectives. These methods establish the value of calibration, scaling, and feedback. WaterSIC~\citep{lifar26} combines all three and finds a diagonal smoothing matrix which, when combined with GPTQ/SIC, obtains near-information-theoretic optimal weight-only entropy coded integer quantization. Another recent work~\citep{smoothGPTQ} addresses optimal diagonal smoothing for weights and activations INT quantization using GPTQ is. It turns out that NVFP4 requires a markedly different criterion for choosing the diagonal smoothing matrix.
 OSFP4 jointly adapts smoothing and block scales to the NVFP4 grid and the chosen rounding procedure.

\paragraph{Transforms for low-bit inference.}
Prior work demonstrated the value of randomized rotations for low-bit quantization~\citep{chee2023quip,ashkboos2024quarot,liu2025spinquant}. For NVFP4, however, full-dimensional rotations were found less beneficial and can even hurt RTN accuracy, whereas MR-GPTQ~\citep{microscaling} benefits from rotations within quantization groups together with format-specific GPTQ optimizations. OSFP4 avoids rotations and instead uses cheaper diagonal transformations, jointly optimized (offline) with the block scales. Despite this simpler transformation, our W-SIC/X-RTN configuration achieves higher average W4A4 accuracy than MR-GPTQ under the common evaluation protocol in Table~\ref{tab:accuracy-summary}.

\paragraph{NVFP4 scale selection.}
Recent work shows that absmax scaling leaves substantial room for improvement. Four Over Six (4/6)~\citep{cook2026fouroversix} adaptively selects between absmax-to-4 and absmax-to-6 scaling for each block. ScaleSearch~\citep{gupta2026search} searches block scales to reduce reconstruction error, and ScaleSweep~\citep{lin2026scalesweep} derives bounded search ranges for MSE and weighted MSE objectives. SOAR~\citep{bao2026soar} combines joint global/block-scale optimization with decoupled encoding and decoding scales. H-Scale~\citep{yu2026hscale} uses calibration activation second moments to select E4M3 block scales by minimizing a weighted-mean-squared weight reconstruction error. OSFP4 combines accurate scale selection with diagonal smoothing, jointly optimizing both while accounting for the applied rounding procedure (RTN/SIC).  Online activation-scale search is complementary to this procedure; our default uses absmax activation scaling.

\paragraph{Quantization-error models.}
Our analysis builds on high-rate models of floating-point error and covariance-aware quantized matrix multiplication~\citep{OP26surveyI,OP26surveyII}. We use multiplicative randomization to obtain a smooth FP4 error function that retains finite-range effects, and derive distinct objectives for RTN and SIC, with or without activation quantization. This makes the intended feedback procedure part of the smoothing optimization and connects the randomized loss to the subsequent deterministic scale search.

\section{Loss Functions}
\label{sec:lossfunc}

\subsection{Smoothed FP4 quantization error} 
Denote by 
\begin{align}
\Cfp=\left\{0,\pm\frac{1}{2},\pm 1,\pm\frac{3}{2},\pm 2,\pm 3,\pm 4,\pm 6 \right\}
\label{eq:CscaleDef}
\end{align}
the points in the FP4 E2M1 constellation, and for $x\in\RR$ let
\begin{align}
Q_{\FP 4}(x)=\argmin_{\hat{x}\in\Cfp}|x-\hat{x}|    
\end{align}
be the deterministic nearest-neighbor quantizer of $x$ to FP4. Inspired by the \emph{dithered absmax FP quantizer} from~\citet{OP26surveyI} we consider the following randomized FP4 quantizer 
\begin{align}
\tilde{Q}_{\mathrm{FP4}}(x)=2^{U} Q_{\mathrm{FP4}}\left(2^{-U}x \right),
\label{eq:QtDef}
\end{align}
where $U\sim\Unif([0,1))$ is statistically independent of $x$, and its corresponding normalized MSE function is 
\begin{align}
\phi(x)=\frac{1}{x^2}\EE\left(x-\tilde{Q}_{\mathrm{FP4}}(x)\right)^2.    
\end{align}
We set $\phi(0)=1$ by continuous extension.
The following lemma provides two key properties of $\tilde{Q}_{\mathrm{FP4}}(x)$ and $\phi(x)$.
\begin{lemma}
\begin{enumerate}
\item For any $x\in\RR^n$ there exists a fixed $\beta\in [1,2)$ such that
\begin{align}
\|x-\beta \cdot Q_{\mathrm{FP4}}(\beta^{-1} x)\|^2\leq \sum_{i=1}^n x_i^2 \phi(x_i).   
\end{align}
\item $\phi(x)$ is an even function. Its minimum is $\phi_{\mathrm{min}}=\EE[(1-2^{-(U+1)}\mathrm{round}(2^{U+1}))^2]\approx 0.010723$ and it attains this minimum on the entire interval $\m{I}_{\mathrm{FP4}}=[7/4,7]$. 
\end{enumerate}
\label{lem:randFP4props}
\end{lemma}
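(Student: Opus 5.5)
Part 1 is an averaging argument. With $\beta=2^{U}$ and $U\sim\Unif([0,1))$, the definition \eqref{eq:QtDef} applied coordinatewise with a common $U$ gives
\begin{align}
\EE_U\|x-2^{U}Q_{\FP 4}(2^{-U}x)\|^2=\sum_{i=1}^n \EE\left(x_i-\Qt(x_i)\right)^2=\sum_{i=1}^n x_i^2\phi(x_i).
\end{align}
The marginal of each coordinate is exactly the randomized quantizer of $x_i$, so linearity of expectation applies even though the same $U$ is shared. A random variable cannot always exceed its mean, so some $u\in[0,1)$ satisfies the bound, and $\beta=2^{u}\in[1,2)$. One technicality is measurability of the integrand in $u$. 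It is piecewise continuous with finitely many breakpoints, so it is Lebesgue integrable. Since $\phi(0)=1$ corresponds to error $0=x_i^2\phi(x_i)$ when $x_i=0$, zero entries cause no issue.

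For part 2, evenness follows because $\Cfp$ is symmetric, so $Q_{\FP 4}(-y)=-Q_{\FP 4}(y)$ (with consistent tie-breaking, which is a measure-zero event in $U$ for $x\ne0$). Hence $\phi(-x)=\phi(x)$. Next I will reduce to the normalized error. Writing $x>0$ and $y=2^{-U}x$, we have $\phi(x)=\EE[(1-Q_{\FP 4}(y)/y)^2]$. As $U$ ranges over $[0,1)$, $\log_2 y$ is uniform on the length-one window $(\log_2x-1,\log_2 x]$.

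The structural fact I will use is that the positive constellation, written as $\{1/2,1,3/2,2,3,4,6\}$, looks like $2^e\{1,3/2\}$ for $e=-1,0,1,2$. On the interior, meaning $y\in[3/4\cdot\ldots]$, roughly between $1/2$ and $6$ with neighbors on both sides, the relative error $1-Q(y)/y$ is therefore $2$-scale invariant. In other words, $g(y):=(1-Q_{\FP 4}(y)/y)^2$ satisfies $g(2y)=g(y)$ whenever both $y$ and $2y$ lie in the ``unsaturated'' region. I will determine that region exactly. Rounding is to a scale-invariant grid provided no rounding decision involves $0$ (below) or the missing point beyond $6$ (above). The decision boundary between $0$ and $1/2$ is $1/4$, but the grid $\{1/2,3/4?\}$ fails already. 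More carefully, the scale-invariant structure $\{1,3/2,2\}\cdot 2^e$ holds with boundaries from $7/8$ up to $7$ (midpoints $3/4$ and $5/4$ appear). So I will check directly that the invariance $g(2y)=g(y)$ holds for $y\in[7/8,7/2]$.

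Then for $x\in[7/4,7]$ the window for $y$ is $[x/2,x]\subset[7/8,7]$, and a length-one window in $\log_2$ of a $1$-periodic function always gives the same average. That average is $\EE g(2^{V})$ over one period, which I will rewrite as $\EE[(1-2^{-(U+1)}\mathrm{round}(2^{U+1}))^2]$ because on $[1,2)\cdot 2$ the grid $\{2,3,4\}$ is just the integers. This yields the stated constant, evaluated numerically as $\approx0.010723$.

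Finally, to show this is the minimum, I will argue that outside $[7/4,7]$ the window includes points where $g$ is pointwise at least its periodic extension. Above $7$, saturation at $6$ replaces the rounding to the next point $8$ with the farther point $6$. Below $7/8$, near-zero rounding goes to $0$ or $1/2$, where relative error is at least that of the periodic grid. So $g\ge g_{\mathrm{per}}$ pointwise, giving $\phi(x)\ge\phi_{\min}$. This pointwise comparison near the bottom (the $0$ and $1/2$ region, including $y<1/4$ where $g=1$) is the step I expect to need the most care. I will verify it case by case on the intervals $[1/4,3/4]$ and $[3/4,7/8]$.
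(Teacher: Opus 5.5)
Your proposal is correct and follows essentially the paper's route: write $\phi$ as the average of the normalized error $g$ over a unit window in $\log_2 y$, bound $g$ below by its scale-invariant counterpart $g_{\mathrm{per}}$ (with $g_{\mathrm{per}}(2y)=g_{\mathrm{per}}(y)$), and get equality for $x\in[7/4,7]$ because the window then lies in $[7/8,7]$, where $Q_{\FP 4}$ agrees with rounding to $\{2^k,\tfrac32 2^k\}$. The step you flag as delicate, $g\ge g_{\mathrm{per}}$, needs no case analysis: $g_{\mathrm{per}}$ is exactly the relative error of nearest-neighbor rounding onto the superset $\mathcal{C}_{E\infty M1}=\{0\}\cup\{\pm 2^k,\pm\tfrac32 2^k:k\in\ZZ\}\supset\Cfp$, so the inequality holds for every $y$ at once, which is how the paper argues (also, $\tfrac34\notin\Cfp$, so the $e=-1$ copy in your description of the constellation should be dropped).
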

A proof is given in Appendix~\ref{app:randomized-fp4-proof}. Figure~\ref{fig:phi-fp4} plots $\phi(x)$ as well as the non-smoothed $\phi_{\mathrm{exact}}(x)=\frac{1}{x^2}\EE\left(x-Q_{\mathrm{FP4}}(x)\right)^2$. The plateau in $\m{I}_{\mathrm{FP4}}$ is visibly clear.

\begin{figure}[H]
  \centering
  \includegraphics[width=0.63\linewidth]{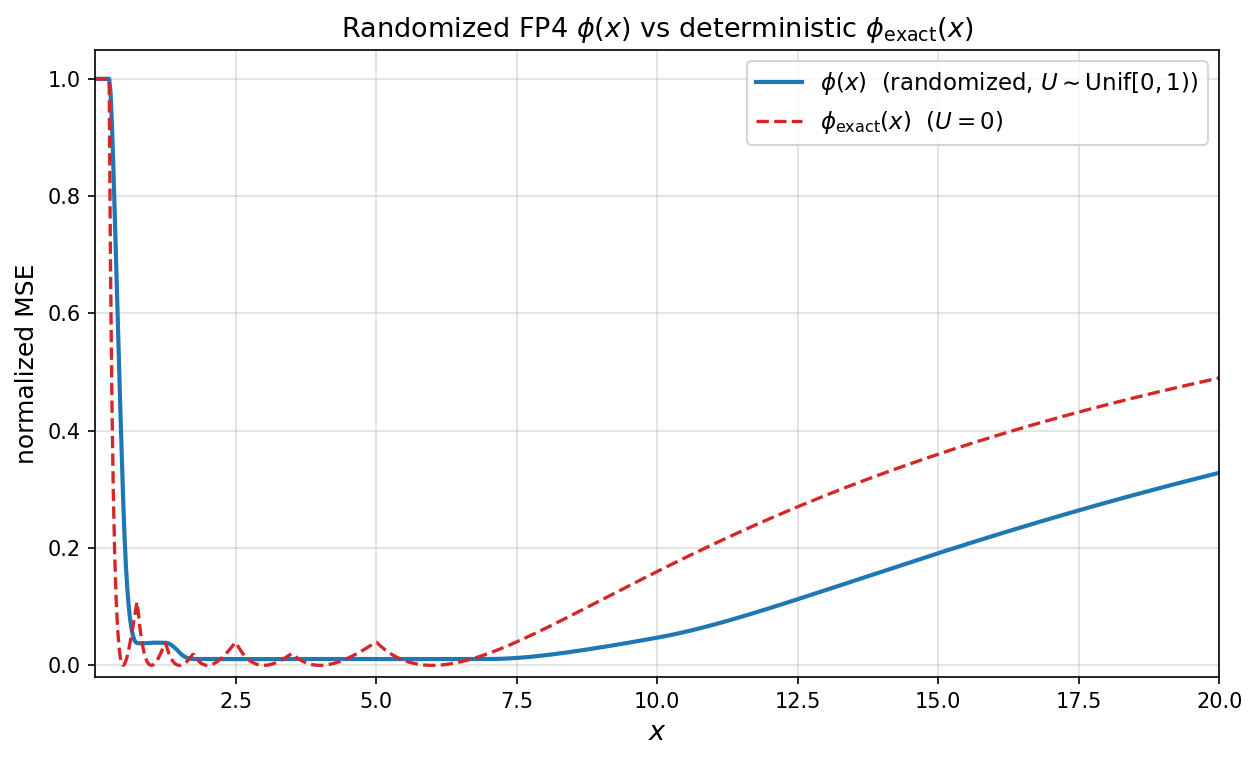}
  \caption{Normalized mean-squared error of randomized FP4 quantization,
  \(\phi(x)=\frac{\mathbb{E}[(x-\tilde Q_{\mathrm{FP4}}(x))^2]}{x^2}\),
  and its deterministic counterpart,
  \(\phi_{\mathrm{exact}}(x)=\frac{(x-Q_{\mathrm{FP4}}(x))^2}{x^2}\).}
  \label{fig:phi-fp4}
\end{figure}

\subsection{Matrix-product error under RTN and SIC}

Let $W\in\RR^{m\times n}$ and $X\in\RR^{n\times k}$, where the columns of $X$ are calibration activations, and define $\Sigma_X=XX^\top/k$. We partition the inner dimension into $L=n/16$ groups: $W_\ell=W(:,1+16(\ell-1):16\ell)$ and $X_\ell=X(1+16(\ell-1):16\ell,:)$, with $A_\ell=\diag(\alpha_\ell^{(1)},\ldots,\alpha_\ell^{(16)})$. Write $\Gamma_{W_\ell}=\left\{\gamma_{W_\ell}^{(i)}\right\}_{i=1}^m$ and $\Gamma_{X_\ell}=\left\{\gamma_{X_\ell}^{(j)}\right\}_{j=1}^k$ for the row and column scales. The matrices entering the FP4 quantizers are
\begin{align}
W'_\ell=\diag(\Gamma_{W_\ell})^{-1}W_\ell A_\ell^{-1},\qquad
X'_\ell=A_\ell X_\ell\diag(\Gamma_{X_\ell})^{-1}.
\label{eq:scaledMatsIntro}
\end{align}
For either deterministic or randomized quantization, let the dequantized matrices $\hat W_\ell$ and $\hat X_\ell$ approximate $W_\ell A_\ell^{-1}$ and $A_\ell X_\ell$, respectively. Denote their errors by $e_{W_\ell}$ and $e_{X_\ell}$ and concatenate them into $e_W$ and $e_X$. The matrix-product error is
\begin{align}
e=\hat W\hat X-WX=e_WAX+WA^{-1}e_X+e_We_X
\approx e_WAX+WA^{-1}e_X.
\label{eq:error1storder}
\end{align}
Here we neglect the second-order term $e_We_X$.

To construct the optimization loss, we now analyze the randomized quantizer $\Qt$ from~\eqref{eq:QtDef}; the final quantization procedure, however, uses the deterministic quantizer $Q_{\FP4}$, as described in Section~\ref{sec:optimization}. We will optimize $A,\Gamma_W,\Gamma_X$ in order to optimize the distortion of the randomized quantizer, then fix the obtained $A$, and afterwards refine $\Gamma_W$ with respect to the deterministic quantizer. Since the randomized quantizer's relative MSE $\phi$ is minimized on $\m{I}_{\FP4}$, the rough goal is to choose $A,\Gamma_W,\Gamma_X$ so that as many entries as possible of the matrices in~\eqref{eq:scaledMatsIntro} have magnitudes in this interval. 

In accordance with Lemma~\ref{lem:randFP4props}, we model the randomized FP4 errors as
\begin{align}
\Qt(y)=y+\sqrt{y^2\phi(y)}Z,
\label{eq:qterrormodel}
\end{align}
where $Z$ has zero mean and unit variance, with independent noise across entries and blocks. These are modeling assumptions, rather than exact unbiasedness and independence properties of the quantization errors. The complete derivations are given in Appendix~\ref{app:loss-derivations}.

\textbf{Round to nearest (RTN) quantization:} To analyze RTN, we apply the randomized quantizer to the scaled signal without feedback, and multiply the output by the inverse scaling.
Specifically, given $A,\Gamma_W$ under RTN quantization the entries of the weight matrix $W$ are quantized to
\begin{align}
\hat{W}_\ell^{\FP4}= \Qt\left(\diag(\Gamma_{W_\ell})^{-1}W_\ell A^{-1}_\ell\right),~~\text{so that}~~\hat{W}_\ell= \diag(\Gamma_{W_\ell})\Qt\left(\diag(\Gamma_{W_\ell})^{-1}W_\ell A^{-1}_\ell\right).
\end{align}
With the block scales $\left\{\gamma^{(i)}_{W_{\ell}}\right\}_{i=1}^{m}$ and smoothing coefficients $\left\{\alpha_\ell^{(r)}\right\}_{r=1}^{16}$ defined above, we obtain
\begin{align}
e_{W_{\ell}}(i,r)=\hat{W}_\ell(i,r)-\frac{W_\ell(i,r)}{\alpha^{(r)}_\ell}=\sqrt{\left(\frac{W_\ell(i,r)}{\alpha^{(r)}_\ell}\right)^2\phi\left(\frac{W_\ell(i,r)}{\gamma_{W_\ell}^{(i)}\alpha^{(r)}_\ell} \right)} Z_\ell(i,r),
\end{align}
where $\{Z_\ell(i,r)\}$ are iid zero-mean, unit-variance random variables.

Let
\begin{align}
L^{\mathrm{RTN}}_{W_\ell}(\Gamma_{W_\ell},A_\ell)=\EE\|e_{W_\ell}A_\ell X_\ell\|_F^2,    
\end{align}
where $X_\ell,W_\ell,\Gamma_{W_\ell},A_\ell$ are assumed fixed and the expectation is only with respect to the randomization of $\Qt(\cdot)$, that is, with respect to the iid random variables $\{Z_\ell(i,r)\}$. It immediately follows that
\begin{align}
L^{\mathrm{RTN}}_{W_\ell}(\Gamma_{W_\ell},A_\ell)&=\sum_{j=1}^k\sum_{i=1}^m\sum_{r=1}^{16}W^2_{\ell}(i,r) X^2_{\ell}(r,j)\cdot\phi\left(\frac{W_\ell(i,r)}{\gamma_{W_\ell}^{(i)}\alpha^{(r)}_\ell} \right)\nonumber\\
&=k\sum_{r=1}^{16}\sigma^2_{X_\ell}(r)\sum_{i=1}^m W^2_{\ell}(i,r) \cdot\phi\left(\frac{W_\ell(i,r)}{\gamma_{W_\ell}^{(i)}\alpha^{(r)}_\ell} \right),
\label{eq:LWRTN}
\end{align}
where $\sigma^2_{X_\ell}(r)=\frac{1}{k}\sum_{j=1}^k X^2_\ell(r,j)$ (we also have $\sigma^2_{X_\ell}(r)=\Sigma_X(r+16(\ell-1),r+16(\ell-1))$).

We assume that the random variables $\{Z_\ell(\cdot,\cdot)\}$ corresponding to different blocks are also independent. This implies
\begin{align}
 L^{\mathrm{RTN}}_W=\EE\|e_W A X\|_F^2=\sum_{\ell=1}^L L^{\mathrm{RTN}}_{W_\ell}(\Gamma_{W_\ell},A_\ell).  
\end{align}

If activations are also quantized, applying RTN with the randomized FP4 quantizer gives
\begin{align}
\hat{X}_\ell^{\FP4}= \Qt\left(A_\ell X_\ell  \diag(\Gamma_{X_\ell})^{-1}\right),~~\text{so that}~~\hat{X}_\ell= \Qt\left(A_\ell X_\ell  \diag(\Gamma_{X_\ell})^{-1}\right)\diag(\Gamma_{X_\ell}),
\end{align}
and
\begin{align}
e_{X_{\ell}}(r,j)=\hat{X}_\ell(r,j)-\alpha^{(r)}_\ell X_\ell(r,j)=\sqrt{\left(\alpha^{(r)}_\ell X_\ell(r,j)\right)^2\phi\left(\frac{\alpha^{(r)}_\ell}{\gamma_{X_\ell}^{(j)}} X_\ell(r,j) \right)} \tilde{Z}_\ell(r,j),
\end{align}
where $\left\{\gamma^{(j)}_{X_{\ell}}\right\}_{j=1}^{k}$ denote the activation block scales, and $\{\tilde{Z}_\ell(r,j)\}$ are iid zero-mean unit-variance random variables statistically independent of everything. Similar to the derivation above, we obtain

\begin{align}
L^{\mathrm{RTN}}_{X_\ell}(\Gamma_{X_\ell},A_\ell)=\EE\|W_\ell A^{-1}_\ell e_{X_\ell}\|_F^2=m\sum_{r=1}^{16}\sigma^2_{W_\ell}(r)\sum_{j=1}^k X^2_{\ell}(r,j) \cdot\phi\left(\frac{\alpha^{(r)}_\ell}{\gamma_{X_\ell}^{(j)}} X_\ell(r,j) \right) ,   
\label{eq:LXRTN}
\end{align}
where $\sigma^2_{W_\ell}(r)=\frac{1}{m}\sum_{i=1}^m W^2_{\ell}(i,r)$. We also have
\begin{align}
 L^{\mathrm{RTN}}_X=\EE\|W A^{-1} e_X\|_F^2=\sum_{\ell=1}^L L^{\mathrm{RTN}}_{X_\ell}(\Gamma_{X_\ell},A_\ell).      
\end{align}

\textbf{Successive interference cancellation (SIC):} Note that
\begin{align}
 \|e_W AX\|_F^2=\tr\left((e_W AX)(e_W AX)^\top\right)=k\tr\left(e_W A\Sigma_X A^\top e_W^\top \right).   
\end{align}
Let $U\in\RR^{n\times n}$ be an upper triangular matrix such that $\Sigma_X=U^\top U$, and denote 
\begin{align}
Y=UA^\top (WA^{-1})^\top=U W^\top\in \RR^{n\times m}.
\end{align}
We have
\begin{align}
 \|e_W AX\|_F^2=k\|U A^\top e_W^\top\|_F^2=k\|Y-UA\hat{W}^\top\|_F^2.   
 \label{eq:WMSEloss}
\end{align}
Even when $A$ and the E4M3 scaling matrices $\{\Gamma_{W_\ell}\}_{\ell=1}^{L}$ are fixed, the problem of choosing $\hat{W}^{\FP 4}\in \Cfp^{m\times n}$ such as to minimize~\eqref{eq:WMSEloss} is computationally difficult due to the coupling the matrix $UA$ introduces between different columns of $\hat{W}$ (equivalently, rows of $\hat{W}^\top$). As common in the literature, instead of attempting an optimal solution, we resort to the SIC/GPTQ approach. Taking into account the structure of the NVFP4 constellation, this boils down to quantizing the weight columns successively, starting from the last, and feeding the error back to the remaining columns, as in Algorithm~\ref{alg:NVFP4SIC} in Appendix~\ref{app:loss-derivations}.

The analysis of $L_W^{\mathrm{SIC}}=k\EE\|Y-UA\hat{W}^\top\|_F^2$ under the SIC algorithm is a bit complicated, even under the randomized FP4 quantization error model~\eqref{eq:qterrormodel}. The reason is that $Z$ entries from time $t$ affect $Y^{(t')}$ for $t'<t$ due to the feedback nature of the quantization process. We therefore make a high-resolution quantization assumption and assume the energy of the (filtered) quantization noise is negligible next to the signal (entries of $Y$) energy. Under this assumption we have that
$Y^{(t)}(t,:)\approx U(t,t)\cdot (W(:,t))^\top$. Under this additional approximation,
\begin{align}
L^{\mathrm{SIC}}_{W_\ell}(\Gamma_{W_\ell},A_\ell)
=k\sum_{r=1}^{16}U_\ell^2(r)\sum_{i=1}^mW_\ell^2(i,r)
\phi\left(\frac{W_\ell(i,r)}{\gamma_{W_\ell}^{(i)}\alpha_\ell^{(r)}}\right),
\label{eq:LWSIC}
\end{align}
where $U_\ell(r)=U(16(\ell-1)+r,16(\ell-1)+r)$. The only difference from $L^{\mathrm{RTN}}_{W_\ell}(\Gamma_{W_\ell},A_\ell)$ is replacing $\sigma^2_{X_\ell}(r)$ by $U_\ell^2(r)$. The latter is the variance of the optimal prediction error of coordinate $t=16(\ell-1)+r$ from coordinates $t+1:n$ of a Gaussian vector with covariance $\Sigma_X$, and hence $U_\ell^2(r)\leq\sigma^2_{X_\ell}(r)$. This distinction allows us to optimize $A$ taking into account that SIC will be used for the actual quantization.

\textbf{Total quadratic Distortion:} Using~\eqref{eq:LWRTN}~\eqref{eq:LXRTN} and~\eqref{eq:LWSIC} we obtain the following loss functions for randomized NVFP4 quantization, as a function of $A$ and the E4M3 scales, under the various quantization setups:
\begin{align}
L^{W-\mathrm{RTN}}&=\sum_{\ell=1}^L L^{\mathrm{RTN}}_{W_\ell}(\Gamma_{W_\ell},A_\ell)\label{eq:PrdictedLossWRTN}\\
L^{W-\mathrm{SIC}}&=\sum_{\ell=1}^L L^{\mathrm{SIC}}_{W_\ell}(\Gamma_{W_\ell},A_\ell)\label{eq:PrdictedLossWSIC}\\
L^{W-\mathrm{RTN},X-\mathrm{RTN}}&=\sum_{\ell=1}^L \left[L^{\mathrm{RTN}}_{W_\ell}(\Gamma_{W_\ell},A_\ell)+L^{\mathrm{RTN}}_{X_\ell}(\Gamma_{X_\ell},A_\ell)\right]\label{eq:PrdictedLossWRTNXRTN}\\
L^{W-\mathrm{SIC},X-\mathrm{RTN}}&=\sum_{\ell=1}^L \left[L^{\mathrm{SIC}}_{W_\ell}(\Gamma_{W_\ell},A_\ell)+L^{\mathrm{RTN}}_{X_\ell}(\Gamma_{X_\ell},A_\ell)\right].
\label{eq:PrdictedLossWSICXRTN}
\end{align}
Note that in all expressions the dependency on $A$ and the E4M3 scales \emph{completely decouples to $L$ different functions, that depend on different parameters}. Therefore, the optimization on $\{A_\ell,\Gamma_{W_\ell},\Gamma_{X_\ell}\}$ decouples to $L$ independent optimization problems.

\section{The OSFP4 Quantization Algorithm}
\label{sec:optimization}

We are now ready to describe how the Optimized Smoothing and Scaling for NVFP4 (OSFP4) quantization algorithm works. This algorithm has four different configurations: $W-\mathrm{RTN},W-\mathrm{SIC},W-\mathrm{RTN}/X-\mathrm{RTN}$ or $W-\mathrm{SIC}/X-\mathrm{RTN}$, corresponding to either W4A16 or W4A4, and to whether $W$ is quantized using RTN or using SIC. 

The parameters for each linear projection $WX$ are optimized independently. We describe the procedure for a single projection operation. Our quantization pipeline is as follows.

\textbf{Calibration set:} For all $4$ configurations we first collect $k_1$ activation vectors $x_1,\ldots,x_{k_1}$ and set $\Sigma_X=\frac{1}{k_1}\sum_{j=1}^{k_1}x_j x_j^\top$. Since computing $\Sigma_X$ does not require many resources even when $k_1$ is large, we typically use large $k_1$, as is standard in calibration based quantization. The terms $L^{W-\mathrm{RTN}},L^{W-\mathrm{SIC}}$ depend on the calibration set only through $\Sigma_X$, and for weight-only quantization we therefore keep only $\Sigma_X$ and  set $k=k_1$. 

In the setups involving activations quantization, the term $L^{X-\mathrm{RTN}}$ also participates in the loss function, and the complexity of optimizing this term grows with $k$. Even though the optimization is performed offline, we will limit $k$ to modest values, such that the quantization procedure will not require too many resources. We therefore take $k<k_1$ and  select a random subset $\m{J}=\{j_1,\ldots,j_k\}$ of  $k$ calibration vectors and set $X=[x_{j_1}|\cdots|x_{j_k}]\in\RR^{n\times k}$. 

\textbf{Optimization of $A$ (and initial guess for $\Gamma_W)$:} Depending on the configuration $W-\mathrm{RTN},W-\mathrm{SIC},W-\mathrm{RTN}/X-\mathrm{RTN}$ or $W-\mathrm{SIC}/X-\mathrm{RTN}$ we construct the relevant loss function from ~\eqref{eq:PrdictedLossWRTN}-\eqref{eq:PrdictedLossWSICXRTN} (note that the first two depend only on $\Sigma_X$, whereas the last two also depend on $X$). Then, for every $\ell=1,\ldots,L$ we perform joint optimization on $A_\ell,\Gamma_{W_\ell}$ and also $\Gamma_{X_\ell}$ whenever relevant. As described above, all loss functions \eqref{eq:PrdictedLossWRTN}-\eqref{eq:PrdictedLossWSICXRTN} decouple to $L$ independent terms, so optimizing for each $\ell$ separately is optimal. The optimization is iterative and is initialized with $A=I_n$ and the $\Gamma_W,\Gamma_X$ corresponding to absmax-to-6 scaling. The precise details of the optimization iterations are given in Appendix~\ref{app:scale-optimization}. To illustrate the benefit of this optimization over the standard initialization values, Figure~\ref{fig:Limprovement} in Appendix~\ref{app:scale-optimization} shows how much $L^{W-\mathrm{SIC},X-\mathrm{RTN}}$ improved from its initial value along the different layers of Llama-3.1-8B-Instruct.

The value of $A$ found by optimization is the final value used for quantization. The value found for $\Gamma_W$ will be used to dictate the search space for the next step. Even in the setups $W-\mathrm{RTN}/X-\mathrm{RTN}$ or $W-\mathrm{SIC}/X-\mathrm{RTN}$ The value of $\Gamma_X$ is irrelevant for the actual quantization of the activations, and will be determined in real-time based on the values of the actual activations computed in inference.

\textbf{Optimization of $\Gamma_W$:} The value of $\Gamma_W$ found in the previous step was optimized assuming the randomized quantizer $\Qt$  is used, whose output is generally not in $\Cfp$. Since in practice we are restricted to output $\hat{W}^{\FP 4}\in\Cfp^{m\times n}$, the randomized quantizer is not applicable, and we should use $Q_{\FP 4}(\cdot)$ instead. Nevertheless, using part 1 of Lemma~\ref{lem:randFP4props}, for any fixed $y\in\RR^{16}$ and $\gamma_{W_\ell}^{(i)}$ there must be some $\beta_{W_\ell}^{(i)}\in [1,2)$ such that the distortion for the deterministic $\gamma_{W_\ell}^{(i)}\beta_{W_\ell}^{(i)}Q_{\FP 4}\left(\frac{y}{\gamma_{W_\ell}^{(i)}\beta_{W_\ell}^{(i)}} \right)$ is no larger than the expected distortion of the randomized quantizer $\gamma_{W_\ell}^{(i)}\Qt\left(\frac{y}{\gamma_{W_\ell}^{(i)}} \right)$. In other words, the effective deterministic scale $\gamma'=\gamma_{W_\ell}^{(i)}\beta_{W_\ell}^{(i)}$ lies in $[\gamma_{W_\ell}^{(i)},2\gamma_{W_\ell}^{(i)}]$, and the deterministic quantizer $\gamma' Q_{\FP 4}\left(\frac{y}{\gamma'} \right)$ attains no larger distortion than the expected distortion of the randomized quantizer $\gamma_{W_\ell}^{(i)}\Qt\left(\frac{y}{\gamma_{W_\ell}^{(i)}} \right)$. In fact, the distortion for the optimal $\gamma'$ may be substantially smaller than that of the randomized quantizer, as it can always place at least one of the entries of $y$ on a point in $\Cfp$, cancelling its quantization error completely.  

Consequently, for each $\ell=1,\ldots,L$ and $i=1,\ldots,m$ we  determine the final scale by trying all valid scale values in an interval containing $[\gamma_{W_\ell}^{(i)},2\gamma_{W_\ell}^{(i)}]$. We can only use scales in $\Cscale$, the set of positive E4M3 numbers. Consequently, not all values in $[\gamma_{W_\ell}^{(i)},2\gamma_{W_\ell}^{(i)}]$ can be used, and we therefore choose the best scale in 
$\Cscale^+(\gamma_{W_\ell}^{(i)})=\Cscale\cap\left[\frac{\gamma_{W_\ell}^{(i)}}{1.2},\frac{\gamma_{W_\ell}^{(i)}}{0.3}\right]$.
The fine-grained scale-selection procedure is described in Appendix~\ref{app:projection-mse}. Projection-level comparisons of the predicted loss, randomized quantization, and final deterministic scale selection for all four configurations are given in Appendix~\ref{app:projection-mse-experiment}.

\textbf{Optimization of $\Gamma_X$:} We first need to multiply the activation $X$ by the diagonal matrix $A$. This is handled in the LayerNorm whenever it exists, or via an additional diagonal scaling layer when there is no LayerNorm. Under $W-\mathrm{RTN}$ and $W-\mathrm{SIC}$ the activation $X$ is not quantized, whereas under $W-\mathrm{RTN}/X-\mathrm{RTN}$ and $W-\mathrm{SIC}/X-\mathrm{RTN}$ the activations also need to be quantized. Unlike the quantization of $W$ which is done offline, the quantization of $X$ is done in inference time and should therefore be sufficiently fast. Our default is to compute a global scale for $X$ based on the calibration set, and then quantize each $16$ consecutive entries using absmax-to-6. The fixed global scale allows to perform this quantization online, even before the entire tensor of activations was computed. This is the default choice in our experimental results. We nevertheless mention that recent work~\cite{gupta2026search,lin2026scalesweep} advocates searching a small bank of E4M3 values around the absmax-to-6 value even for the activations, and shows that the overhead this search incurs on the quantization time is quite reasonable~\cite[Table 6]{gupta2026search}. This optimization is referred to as ScaleSearch/ScaleSweep.

\section{Experimental Results}
\label{sec:experiments}

We evaluate whether OSFP4 improves NVFP4 accuracy, how its components contribute, and whether these gains retain the efficiency of native FP4 inference. We focus here on Llama-3.1-8B-Instruct; Appendix~\ref{app:additional-models} contains results for additional models. Locally prepared models share the same source checkpoint and calibration set; all models are evaluated through vLLM on an RTX 5090. We report W4A4 and W4A16 separately and distinguish absmax activation scaling from methods that search activation scales during inference, which typically incur larger overhead. Full task results, calibration settings, and implementation details are given in Appendix~\ref{app:native-benchmark}.

\paragraph{Accuracy under a common evaluation protocol.}
Table~\ref{tab:accuracy-summary} summarizes the mean accuracy on MMLU-CoT, GSM8K, HellaSwag, and WinoGrande. With absmax activation scaling, OSFP4 W-SIC/X-RTN achieves $77.03\%$, compared with $76.34\%$ for FP-Quant GPTQ and $76.18\%$ for MR-GPTQ, and has the highest average among the evaluated methods in this group. The weight-only configuration reaches $78.42\%$, versus $78.00\%$ for H-Scale and $79.22\%$ for BF16. Thus, the accuracy improvement is present both with and without activation quantization.

Online activation-scale search is complementary to OSFP4, and was evaluated mainly for enabling a fair comparison with ScaleSweep~\citet{lin2026scalesweep}. Note that even with absmax activation scaling OSFP4 W-SIC/X-RTN scores better than ScaleSweepMSE+GPTQ, whereas the latter does optimize the activation scale online. 

\begin{figure}[t]
\centering
\begin{minipage}[t]{0.49\linewidth}
\vspace{0pt}
\captionsetup{font=small,skip=5pt}
\centering
\footnotesize
\captionof{table}{Mean accuracy (\%) on Llama-3.1-8B-Instruct. Full scores: Table~\ref{tab:native-nvfp4-comparison}.}
\label{tab:accuracy-summary}
\setlength{\tabcolsep}{3pt}
\renewcommand{\arraystretch}{0.95}
\begin{tabular}{@{}lr@{}}
\toprule
\textbf{Method} & \textbf{Avg.} \\
\midrule
\multicolumn{2}{@{}l}{\textit{BF16 baseline}} \\
BF16 & \textbf{79.22} \\
\midrule
\multicolumn{2}{@{}l}{\textit{W4A4: absmax activation scaling}} \\
RTN (FP-Quant) & 75.65 \\
GPTQ (FP-Quant) & 76.34 \\
MR-GPTQ (FP-Quant) & 76.18 \\
SOAR & 76.60 \\
H-Scale (W4A4 extension) & 76.67 \\
NVIDIA released checkpoint$^\dagger$ & 76.11 \\
OSFP4 W-RTN / X-RTN & 76.36 \\
OSFP4 W-SIC / X-RTN & \textbf{77.03} \\
\midrule
\multicolumn{2}{@{}l}{\textit{W4A4: online activation-scale search}} \\
ScaleSweepMSE & 76.62 \\
ScaleSweepMSE + GPTQ & 76.69 \\
4over6 & 76.34 \\
OSFP4 W-RTN / X-ScaleSweepMSE & 76.96 \\
OSFP4 W-SIC / X-ScaleSweepMSE & \textbf{77.09} \\
\midrule
\multicolumn{2}{@{}l}{\textit{W4A16: weights only}} \\
RTN & 77.67 \\
H-Scale & 78.00 \\
OSFP4 W-RTN & 78.15 \\
OSFP4 W-SIC & \textbf{78.42} \\
\bottomrule
\end{tabular}
\par\vspace{2pt}
\begin{minipage}{\linewidth}\footnotesize
$^\dagger$Separately released checkpoint; preparation is uncontrolled. Bold: best quantized average in each group.
\end{minipage}

\end{minipage}\hfill
\begin{minipage}[t]{0.49\linewidth}
\vspace{0pt}
\captionsetup{font=small,skip=5pt}
\centering
\footnotesize
\captionof{table}{WikiText-2 perplexity ($\downarrow$). W4A4 uses absmax activation scales.}
\label{tab:ablation-summary}
\setlength{\tabcolsep}{3pt}
\begin{tabular}{@{}lrr@{}}
\toprule
\textbf{Configuration} & \textbf{W4A16} & \textbf{W4A4} \\
\midrule
BF16 reference & 7.22 & 7.22 \\
Absmax RTN ($A=I$) & 7.55 & 7.88 \\
MSE scales ($A=I$) & 7.52 & 7.85 \\
OSFP4 W-RTN & 7.42 & 7.77 \\
SIC ($A=I$, absmax) & 7.45 & 7.77 \\
OSFP4 W-SIC & \textbf{7.39} & \textbf{7.67} \\
\bottomrule
\end{tabular}

\par\vspace{12pt}
\centering
\includegraphics[width=0.90\linewidth]{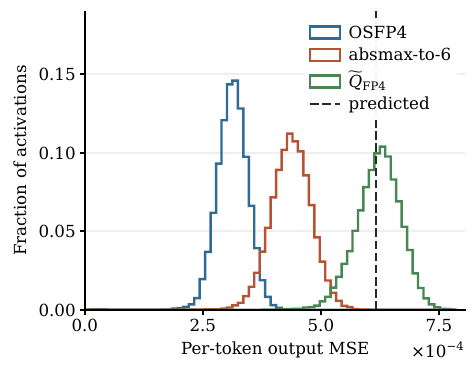}
\captionof{figure}{Layer 12:value, W-RTN/W4A16. Full three-projection comparison: Figure~\ref{fig:projection-mse-rtn-full}.}
\label{fig:projection-mse-panel}
\end{minipage}
\end{figure}

\paragraph{Contribution of smoothing, scale selection, and SIC.}
Table~\ref{tab:ablation-summary} compares the full procedure with simpler configurations. Optimizing only the weight scales by unweighted MSE provides a modest perplexity improvement over absmax. OSFP4 W-RTN optimizes $A$ and selects the weight scales using weighted MSE yielding a more significant improvement. Combining the optimized transformation and scale selection with SIC gives the lowest perplexity among the NVFP4 configurations in both precision settings. In W4A4, perplexity decreases from $7.88$ for absmax RTN to $7.67$ for OSFP4 W-SIC, compared with $7.77$ for SIC using $A=I$ and absmax scales. Appendix~\ref{app:component-ablation} further details an ablation study of the MSE for projection-output error across all seven projections, for all variants. The full method (optimized $A$, scales and SIC) attains the smallest MSE across all layers and compared schemes. These configurations assess the combined procedure rather than isolating the contribution of $A$ alone. Figure~\ref{fig:Limprovement} in Appendix~\ref{app:scale-optimization} shows the reduction in the joint optimization objective across projections.

\paragraph{From predicted loss to deterministic quantization.}
Figure~\ref{fig:projection-mse-panel} examines the two-stage procedure on the value projection in layer $12$. For W-RTN/W4A16, the mean error of the randomized quantizer agrees with the predicted loss $L^{W-\mathrm{RTN}}(A^*,\Gamma_W^*)$  to within $1\%$. The final deterministic E4M3 scale search reduces mean error to approximately half the prediction and improves over absmax weight scaling at the same optimized $A$. This supports using the smoothed loss to guide optimization, followed by fine-grained deterministic scale selection. Appendix~\ref{app:projection-mse-experiment} provides details and more extensive results in this line.

\paragraph{Inference efficiency.}
On an NVIDIA B200, OSFP4 retains approximately $94$--$97\%$ of the vendor NVFP4 prefill throughput across the measured workloads. This modest gap is primarily attributable to the explicit diagonal smoothing required before the attention output and MLP down projections, where the transformation cannot be absorbed into a preceding LayerNorm. Consistent with this explanation, a control that sets $A=I$ while retaining the same runtime path achieves essentially the same throughput as OSFP4. Appendix~\ref{app:efficiency} reports the full workload sweep and timing protocol.

\label{ninepage:main-end}

\subsection*{AI use statement}

In this work, we used generative AI tools for assistance with writing and exposition, for writing code and running experiments, and for assistance in implementation of the scale optimization algorithm, as described in Appendix~\ref{app:scale-optimization}.
We have not used generative AI tools for any original idea. The conceptualization of this project, the idea to use the dithered quantizer for developing a convenient loss function, and the mathematical development of the loss function are solely due to the human authors.
We take responsibility for the final content of this work,
including text, claims or artifacts produced with the aid of generative AI.

\subsection*{Reproducibility statement}

We provide a complete reproduction package for the OSFP4 experiments
reported in this paper. It contains the experiment scripts, configurations,
software dependency versions, GPU specifications, saved measurements, and
commands for regenerating the reported tables and figures. The package is
available in an
\href{https://anonymous.4open.science/r/reproducibility-osfp4-package/README.md}
{OSFP4 reproduction repository}.

\bibliography{iclr2027_conference}

@inproceedings{xiao2023smoothquant,
  title={Smoothquant: Accurate and efficient post-training quantization for large language models},
  author={Xiao, Guangxuan and Lin, Ji and Seznec, Mickael and Wu, Hao and Demouth, Julien and Han, Song},
  booktitle={International conference on machine learning},
  pages={38087--38099},
  year={2023},
  organization={PMLR}
}

@article{OP26surveyI,
  title={High-Rate Quantized Matrix Multiplication I},
  author={Ordentlich, Or and Polyanskiy, Yury},
  journal={IEEE BITS the Information Theory Magazine},
  year={2026},
  publisher={IEEE}
}

@article{gupta2026search,
  title={Search Your Block Floating Point Scales!},
  author={Gupta, Tanmaey and Prairie, Hayden and Wu, Xiaoxia and Abhyankar, Reyna and Wu, Qingyang and Silveria, Austin and Ponnusamy, Pragaash and Wang, Jue and Athiwaratkun, Ben and Song, Leon and Dao, Tri and Fu, Daniel Y. and De Sa, Chris},
  journal={Proceedings of Machine Learning and Systems},
  volume={8},
  pages={1383--1398},
  url={https://arxiv.org/abs/2605.12464},
  year={2026}
}

@inproceedings{microscaling,
	title={Bridging the gap between promise and performance for microscaling FP4 quantization},
	author={Egiazarian, Vage and Castro, Roberto and Kuznedelev, Denis and Panferov, Andrei and Kurtic, Eldar and Pandit, Shubhra and Marques, Alexandre and Kurtz, Mark and Ashkboos, Saleh and Hoefler, Torsten and Alistarh, Dan},
	booktitle={International Conference on Learning Representations},
	volume={2026},
	pages={113529--113563},
	year={2026}
}

@article{savkin2025nestquant,
	title={NestQuant: Nested lattice quantization for matrix products and LLMs},
	author={Savkin, Semyon and Porat, Eitan and Ordentlich, Or and Polyanskiy, Yury},
	journal={arXiv preprint arXiv:2502.09720},
	year={2025}
}

@article{chen2025wush,
	title={Wush: Near-optimal adaptive transforms for llm quantization},
	author={Chen, Jiale and Egiazarian, Vage and Castro, Roberto L and Hoefler, Torsten and Alistarh, Dan},
	journal={arXiv preprint arXiv:2512.00956},
	year={2025}
}

@inproceedings{sun2025flatquant,
  title={{FlatQuant}: Flatness Matters for {LLM} Quantization},
  author={Sun, Yuxuan and Liu, Ruikang and Bai, Haoli and Bao, Han and Zhao, Kang and Li, Yuening and Hu, Jiaxin and Yu, Xianzhi and Hou, Lu and Yuan, Chun and Jiang, Xin and Liu, Wulong and Yao, Jun},
  booktitle={Proceedings of the 42nd International Conference on Machine Learning},
  year={2025},
  url={https://arxiv.org/abs/2410.09426}
}

@article{lin2026scalesweep,
  title={{ScaleSweep}: Accurate {NVFP4} Post-Training Quantization of {LLMs} via Block Scale Initialization},
  author={Lin, Li and Wan, Xiaojun},
  journal={arXiv preprint arXiv:2606.07618}, year={2026},
  url={https://arxiv.org/abs/2606.07618}
}

@article{bao2026soar,
  title={{SOAR}: Scale Optimization for Accurate Reconstruction in {NVFP4} Quantization},
  author={Bao, Chengzhu and Yan, Xianglong and Li, Zhiteng and Qin, Guangshuo and Yu, Guanghua and Zhang, Yulun},
  journal={arXiv preprint arXiv:2605.12245}, year={2026},
  url={https://arxiv.org/abs/2605.12245}
}

@article{yu2026hscale,
  title={{H-Scale}: Hessian-Guided Scale Refinement for {NVFP4} Sub-Byte {LLM} Inference},
  author={Yu, Hao and Li, Zheng and Liu, Dayiheng and Zhang, Jianwei},
  journal={arXiv preprint arXiv:2608.28113}, year={2026},
  url={https://arxiv.org/abs/2608.28113}
}

@software{llmcompressor2024,
    title={{LLM Compressor}},
    author={Red Hat AI and vLLM Project},
    year={2024},
    month={8},
    url={https://github.com/vllm-project/llm-compressor},
}

@inproceedings{frantar2023gptq,
  title={{GPTQ}: Accurate Post-Training Quantization for Generative Pre-trained Transformers},
  author={Frantar, Elias and Ashkboos, Saleh and Hoefler, Torsten and Alistarh, Dan},
  booktitle={International Conference on Learning Representations},
  year={2023},
  url={https://arxiv.org/abs/2210.17323}
}

@inproceedings{lin2024awq,
  title={{AWQ}: Activation-aware Weight Quantization for {LLM} Compression and Acceleration},
  author={Lin, Ji and Tang, Jiaming and Tang, Haotian and Yang, Shang and Chen, Wei-Ming and Wang, Wei-Chen and Xiao, Guangxuan and Dang, Xingyu and Gan, Chuang and Han, Song},
  booktitle={Proceedings of Machine Learning and Systems},
  year={2024},
  url={https://arxiv.org/abs/2306.00978}
}

@inproceedings{shao2024omniquant,
  title={{OmniQuant}: Omnidirectionally Calibrated Quantization for Large Language Models},
  author={Shao, Wenqi and Chen, Mengzhao and Zhang, Zhaoyang and Xu, Peng and Zhao, Lirui and Li, Zhiqian and Zhang, Kaipeng and Gao, Peng and Qiao, Yu and Luo, Ping},
  booktitle={International Conference on Learning Representations},
  year={2024},
  url={https://arxiv.org/abs/2308.13137}
}

@inproceedings{chee2023quip,
  title={{QuIP}: 2-Bit Quantization of Large Language Models With Guarantees},
  author={Chee, Jerry and Cai, Yaohui and Kuleshov, Volodymyr and De Sa, Christopher},
  booktitle={Advances in Neural Information Processing Systems},
  year={2023},
  url={https://arxiv.org/abs/2307.13304}
}

@inproceedings{ashkboos2024quarot,
  title={{QuaRot}: Outlier-Free 4-Bit Inference in Rotated {LLMs}},
  author={Ashkboos, Saleh and Mohtashami, Amirkeivan and Croci, Maximilian L. and Li, Bo and Cameron, Pashmina and Jaggi, Martin and Alistarh, Dan and Hoefler, Torsten and Hensman, James},
  booktitle={Advances in Neural Information Processing Systems},
  year={2024},
  url={https://arxiv.org/abs/2404.00456}
}

@inproceedings{liu2025spinquant,
  title={{SpinQuant}: {LLM} Quantization with Learned Rotations},
  author={Liu, Zechun and Zhao, Changsheng and Fedorov, Igor and Soran, Bilge and Choudhary, Dhruv and Krishnamoorthi, Raghuraman and Chandra, Vikas and Tian, Yuandong and Blankevoort, Tijmen},
  booktitle={International Conference on Learning Representations},
  year={2025},
  url={https://arxiv.org/abs/2405.16406}
}

@article{OP26surveyII,
  title={High-Rate Quantized Matrix Multiplication {II}},
  author={Ordentlich, Or and Polyanskiy, Yury},
  journal={arXiv preprint arXiv:2605.13768},
  year={2026},
  url={https://arxiv.org/abs/2605.13768}
}

@article{lifar26,
  title={{WaterSIC}: Information-Theoretically (Near) Optimal Linear Layer Quantization},
  author={Lifar, Egor and Savkin, Semyon and Ordentlich, Or and Polyanskiy, Yury},
  journal={arXiv preprint arXiv:2603.04956},
  year={2026},
  url={https://arxiv.org/abs/2603.04956}
}

@article{cook2026fouroversix,
  title={Four Over Six: More Accurate {NVFP4} Quantization with Adaptive Block Scaling},
  author={Cook, Jack and Guo, Junxian and Xiao, Guangxuan and Lin, Yujun and Wyss, Keith and Nazemi, Mahdi and Mishra, Asit and del Mundo, Carlo and Blankevoort, Tijmen and Han, Song},
  journal={arXiv preprint arXiv:2512.02010},
  year={2026},
  url={https://arxiv.org/abs/2512.02010}
}

@unpublished{smoothGPTQ,
  title={{SmoothGPTQ}: Optimizing Diagonal Scaling for {GPTQ} Weight-Activation Quantization},
  author={Netanell Avisdris and Erez Badash and Yoray Herzberg and Or Ordentlich},
  year={2027},
  note={Submitted}
}
\bibliographystyle{iclr2027_conference}

\appendix
\section{Proof of Lemma~\ref{lem:randFP4props}}
\label{app:randomized-fp4-proof}

We first express the smoothed relative error as an integral over a unit interval in logarithmic coordinates.

\begin{proposition}[Integral representation of the smoothed relative error]
\label{prop:phi-log-integral}
For every $t\in\RR$,
\begin{align}
 \phi(2^t)
 &=\int_{t-1}^{t}g(v)\,dv,
 &g(v)&=\left(1-2^{-v}Q_{\mathrm{FP4}}(2^v)\right)^2.
 \label{eq:phi-log-integral-proof}
\end{align}
\end{proposition}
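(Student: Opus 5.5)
We prove the identity by unwinding the definitions of $\phi$ and $\Qt$ from~\eqref{eq:QtDef} and then substituting $v=t-U$.

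Fix $t\in\RR$ and set $x=2^t>0$. By definition,
\begin{align}
\phi(x)=\frac{1}{x^2}\EE\left(x-2^{U}Q_{\FP4}(2^{-U}x)\right)^2
=\EE\left(1-\frac{2^{U}}{x}Q_{\FP4}(2^{-U}x)\right)^2,
\end{align}
where $U\sim\Unif([0,1))$. Substituting $x=2^t$ gives $2^{-U}x=2^{t-U}$ and $2^{U}/x=2^{-(t-U)}$. The integrand is therefore exactly $g(t-U)$ with $g(v)=(1-2^{-v}Q_{\FP4}(2^v))^2$, so
\begin{align}
\phi(2^t)=\EE[g(t-U)]=\int_0^1 g(t-u)\,du=\int_{t-1}^{t}g(v)\,dv,
\end{align}
where the last step is the change of variables $v=t-u$. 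When $u$ runs over $[0,1)$, $v$ runs over $(t-1,t]$, and the orientation flip cancels the sign of $dv=-du$.

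It remains to check that these manipulations are legitimate.

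\emph{Measurability.} $Q_{\FP4}$ is piecewise constant with finitely many jumps. Ties at midpoints must be resolved by some fixed convention, but they form a finite set and hence a null set, so they do not affect the integral. Consequently $g$ is a bounded measurable function.

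\emph{Boundedness.} Since $0\le Q_{\FP4}(y)\le y$ for $y\in(0,6]$ and $Q_{\FP4}(y)=6$ for $y>6$, we have $0\le 2^{-v}Q_{\FP4}(2^v)\le 1$. Hence $0\le g\le 1$, and the expectation and the integral are finite and equal.

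No step here is a genuine obstacle. The only points needing care are that $x>0$, so the scaling by $x^{-2}$ can be moved inside the expectation, and the tie-breaking convention of $Q_{\FP4}$, which affects only a measure-zero set.
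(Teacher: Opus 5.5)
Your proof is the same as the paper's: unwind $\Qt$ to get $\int_0^1\bigl(1-2^{u-t}Q_{\mathrm{FP4}}(2^{t-u})\bigr)^2\,du$, then substitute $v=t-u$. One side remark is false, though harmless. Nearest-neighbor rounding can round up (e.g.\ $Q_{\mathrm{FP4}}(0.3)=1/2$), so $Q_{\mathrm{FP4}}(y)\le y$ fails; $0\le g\le 1$ still holds because $0\in\Cfp$ gives $|y-Q_{\mathrm{FP4}}(y)|\le |y|$, and since $g\ge 0$ is measurable, the expectation equals the integral even without boundedness.
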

\begin{proof}
Substituting~\eqref{eq:QtDef} into the definition of $\phi$ and using $U\sim\Unif([0,1))$ gives
\begin{align}
 \phi(2^t)
 &=\EE_U\left[\left(1-2^{U-t}Q_{\mathrm{FP4}}(2^{t-U})\right)^2\right]\nonumber\\
 &=\int_0^1\left(1-2^{u-t}Q_{\mathrm{FP4}}(2^{t-u})\right)^2\,du.
 \label{eq:phi-dither-expectation}
\end{align}
The substitution $v=t-u$, with $du=-dv$, yields~\eqref{eq:phi-log-integral-proof}. 
\end{proof}

\begin{proof}[Proof of Lemma~\ref{lem:randFP4props}]
The first part follows since the minimum is upper bounded by the expectation, taking $\beta=2^U$. The fact that $\phi(-x)=\phi(x)$  follows immediately from $Q_{\FP 4}(-x)=-Q_{\FP 4}(x)$. It remains to show that $\phi(x)\geq \phi_{\mathrm{min}}$ and that this minimum is attained for all $x\in \m{I}_{\mathrm{FP4}}$. To that end, we first resort to the infinite exponent model of~\citet[Section III.B]{OP26surveyI}. In particular, define the ``$E\infty M 1$'' constellation
\begin{align}
	\mathcal \m{C}_{E\infty M1}
	=\{0\}\cup
	\left\{\pm 2^k,\ \pm\tfrac32\,2^k:k\in\mathbb Z\right\},
\end{align}
which is clearly a superset of  $\Cfp$. Defining 
\begin{align}
	Q_{\infty}(x)=\argmin_{\hat{x}\in\m{C}_{E\infty M1}}|x-\hat{x}|,    
\end{align}
it therefore follows that
\begin{align}
g_\infty(v)&\triangleq \left(1-2^{-v}Q_\infty(2^v)\right)^2=2^{-2v}\left(2^v-Q_\infty(2^v)\right)^2\nonumber\\
&\leq 2^{-2v}\left(2^v-Q_{\FP 4}(2^v)\right)^2=g(v) ,~~\forall v\in\RR.
\label{eq:EIbound}
\end{align}
Consequently
\begin{align}
\phi(2^t)=\int_{t-1}^{t}g(v)\,dv\geq \int_{t-1}^{t}g_\infty(v)\,dv
\stackrel{(a)}{=}\int_1^2 g_\infty(v)\,dv\stackrel{(b)}{=}\phi_{\mathrm{min}},
\label{eq:phiminbound}
\end{align}
where in $(a)$ we have used the fact that  $2^{-{(v+z)}}Q_\infty(2^{(v+z)})=2^{-{v}}Q_\infty(2^{v})$ for all $v\in\RR$ and $z\in\ZZ$ (since $2^z\mathcal \m{C}_{E\infty M1}=\mathcal \m{C}_{E\infty M1}$). We now establish equality $(b)$ above . This follows since for any $x\in[2,4)$ we have that $Q_\infty(x)=\mathrm{round}(x)$, which implies also that for any $1\leq v<2$ it holds that $g_\infty(v)=(1-2^{-v}\mathrm{round}(2^v))^2$. Consequently
\begin{align}
\int_1^2 g_\infty(v)\,dv=\int_1^2 (1-2^{-v}\mathrm{round}(2^v))^2\,dv=\int_0^1 (1-2^{-{(u+1)}}\mathrm{round}(2^{(u+1)}))^2\,du=\phi_{\mathrm{min}}
\end{align}
as required.

The bound~\eqref{eq:phiminbound} holds with equality whenever ~\eqref{eq:EIbound} holds with equality.
Equality in~\eqref{eq:EIbound} in turn holds for all $v$ for which $Q_\infty(2^v)=Q_{\FP 4}(2^v)$. Noting that
\begin{align}
\mathcal \m{C}_{E\infty M1}=\{\ldots, \tfrac34,\ 1,\ \tfrac32,\ 2,\ 3,\ 4,\ 6,\ 8,\ldots \}
\end{align}
we see that 
\begin{align}
Q_\infty(y)=Q_{\FP 4}(y)~~~ \forall y\in[7/8,7].
\end{align}
In particular, for any $x\in\m{I}_{\FP }=[7/4,7]$ we have that $Q_\infty(x)=Q_{\FP 4}(x)$ on the entire interval $[x/2,x]$. Consequently, setting $t_x=\log_2(x)$ we see that for any $x\in\m{I}_{\FP }=[7/4,7]$ it holds that $g(v)=g_\infty(v)$ for all $v\in [t_x-1,t_x]$ and therefore~\eqref{eq:EIbound} holds with equality for all $x\in\m{I}_{\FP }=[7/4,7]$, as claimed.
\end{proof}

\section{Implementation of the Scale Optimization}
\label{app:scale-optimization}

This appendix specifies the continuous optimization of $A$, $\Gamma_W$, and, when activations are quantized, $\Gamma_X$. It also describes how the resulting parameters determine the FP32 tensor scales used for deployment. The continuous optimization and the subsequent selection of E4M3 weight scales are separate steps: the former uses the smoothed loss $\phi$, whereas the latter evaluates deterministic FP4 quantization error.

\paragraph{Calibration statistics and parameterization.}
For a weight matrix $W\in\RR^{m\times n}$, let $k_1$ be the number of calibration activation vectors used to estimate $\Sigma_X$. For W4A4, the activation term uses a fixed subset of $k\leq k_1$ complete activation vectors, sampled uniformly without replacement. This subset is selected once for the mapping and reused in every optimization iteration. Weight-only optimization requires $\Sigma_X$ but does not retain an activation matrix in its objective. When several projections share one smoothing transformation, their weight matrices are stacked along the output dimension and optimized as one mapping; $m$ then denotes the combined number of output rows.

For each group $\ell$, the trainable parameters are the natural logarithms
\begin{align}
 a_{\ell r}&=\log\alpha_\ell^{(r)}, & r&=1,\ldots,16,\nonumber\\
 b_{\ell i}&=\log\gamma_{W_\ell}^{(i)}, & i&=1,\ldots,m,\nonumber\\
 c_{\ell j}&=\log\gamma_{X_\ell}^{(j)}, & j&=1,\ldots,k.
 \label{eq:optimization-log-parameters}
\end{align}
The $c_{\ell j}$ parameters are omitted for W4A16. Scales are reconstructed by exponentiation, ensuring positivity. We initialize $\alpha_\ell^{(r)}=1$, $\gamma_{W_\ell}^{(i)}=\max_r|W_\ell(i,r)|/6$, and $\gamma_{X_\ell}^{(j)}=\max_r|X_\ell(r,j)|/6$, with positive numerical floors for zero-valued blocks. The weights, calibration activations, and second-moment statistics are held fixed throughout this optimization. The loss and trainable log parameters use FP32 arithmetic.

\paragraph{Objective and normalization.}
Define the fixed weight metric
\begin{align}
 d_{\ell r}=
 \begin{cases}
  \Sigma_X(t,t), & \text{for weight RTN},\\
  U(t,t)^2, & \text{for weight SIC},
 \end{cases}
 \qquad t=16(\ell-1)+r.
 \label{eq:optimization-weight-metric}
\end{align}
For robustness, the implementation computes the SIC factor from the damped second moment,
\begin{align}
 U^\top U=\Sigma_X+\lambda I_n,
 \qquad \lambda=0.01\,\frac{\tr(\Sigma_X)}{n},
 \label{eq:optimization-damping}
\end{align}
as is common in GPTQ. 
The per-group losses evaluated in the implementation are
\begin{align}
 \mathcal J_{W_\ell}
 &=\frac{1}{m}\sum_{r=1}^{16}\sum_{i=1}^{m}
 d_{\ell r}W_\ell(i,r)^2
 \phi\!\left(\frac{W_\ell(i,r)}{\alpha_\ell^{(r)}\gamma_{W_\ell}^{(i)}}\right),
 \label{eq:optimization-weight-objective}\\
 \mathcal J_{X_\ell}
 &=\frac{1}{mk}\sum_{r=1}^{16}\sum_{j=1}^{k}
 \left(\sum_{i=1}^{m}W_\ell(i,r)^2\right)X_\ell(r,j)^2
 \phi\!\left(\frac{\alpha_\ell^{(r)}X_\ell(r,j)}{\gamma_{X_\ell}^{(j)}}\right).
 \label{eq:optimization-activation-objective}
\end{align}
Thus, the scalar objective is $\mathcal J=\sum_\ell\mathcal J_{W_\ell}$ for W4A16 and $\mathcal J=\sum_\ell(\mathcal J_{W_\ell}+\mathcal J_{X_\ell})$ for W4A4. These are the corresponding losses in Section~\ref{sec:lossfunc}, normalized by $mk$, with $\{d_{\ell r}\}$ in the weight metric estimated using all $k_1$ calibration vectors and the activation term averaged over the selected $k$ vectors. The coefficients $d_{\ell r}W_\ell(i,r)^2$ and $X_\ell(r,j)^2\sum_iW_\ell(i,r)^2$ are computed once. All groups are evaluated together in a batched computation; their parameters are disjoint, so summing their objectives introduces no coupling between groups.

\paragraph{Differentiable evaluation of $\phi$.}
We evaluate the smoothed relative error through a lookup table indexed by $t=\log_2|x|$. The table has spacing $h=2^{-12}$ on $[-4,8]$, with entries $\phi(2^{t_q})$ at $t_q=-4+qh$. Each entry is computed using the identity proved in Proposition~\ref{prop:phi-log-integral}, by analytically integrating
\begin{align}
 \phi(2^t)=\int_{t-1}^{t}
 \left(1-2^{-v}Q_{\mathrm{FP4}}(2^v)\right)^2\,dv,
 \label{eq:optimization-phi-integral}
\end{align}
splitting the integral at the FP4 decision boundaries. Table construction uses double precision and the cached table used by the optimizer is FP32. Values between adjacent entries are obtained by linear interpolation in $t$. Recall that the trainable variables are the log-scales $a,b,c$ in~\eqref{eq:optimization-log-parameters}. For nonzero inputs, $t$ is affine in these variables, so their gradients follow directly from the interpolation slope with respect to $t$ via the chain rule, with a factor of $1/\ln 2$ converting natural logarithms to base two. Automatic differentiation treats the integer cell index as fixed and differentiates through the interpolation fraction, the logarithm of the magnitude, and the exponential scale parameterization. Consequently, optimization uses the derivative of the interpolated surrogate, without sampling the randomized quantizer or differentiating through deterministic FP4 rounding. The implementation clamps the logarithmic argument to $[-4,8-h]$ before interpolation; outside this interval the lookup contribution has zero derivative with respect to its argument. At $x=0$ the lookup is finite and its loss coefficient is zero.

\paragraph{Adam iterations.}
We jointly update all the log parameters using Adam for $80$ iterations with a fixed learning rate $\eta=0.12$, moment coefficients $(\beta_1,\beta_2)=(0.9,0.999)$, and $\epsilon=10^{-8}$. Weight decay is zero and AMSGrad is disabled. Let $\theta_s$ collect the log parameters $a,b,c$ after iteration $s$, with $c$ omitted for weight-only quantization, and let $g_s=\nabla_\theta\mathcal J(\theta_{s-1})$. Adam maintains exponentially weighted moving averages $p_s$ of the gradients and $v_s$ of their elementwise squares. The updates are
\begin{align}
 p_s&=\beta_1p_{s-1}+(1-\beta_1)g_s, &
 v_s&=\beta_2v_{s-1}+(1-\beta_2)g_s^{\odot2},\nonumber\\
 \widehat p_s&=p_s/(1-\beta_1^s), &
 \widehat v_s&=v_s/(1-\beta_2^s),\nonumber\\
 \theta_{s,j}&=\theta_{s-1,j}-\eta\,
 \frac{\widehat p_{s,j}}{\sqrt{\widehat v_{s,j}}+\epsilon},
 \qquad j=1,\ldots,\dim(\theta),
 \label{eq:optimization-adam}
\end{align}
where $j$ indexes the trainable log parameters, $p_0=v_0=0$, and all operations are elementwise. The quantities $\widehat p_s$ and $\widehat v_s$ correct the moving averages for their zero initialization. Each iteration clears the previous gradients, evaluates the entire fixed objective, backpropagates, and updates the parameters simultaneously. There is no alternating minimization, learning-rate schedule, gradient clipping, or early stopping. The final iterate is retained, rather than the iterate with the smallest observed loss. The procedure seeks a low-loss solution of the nonconvex surrogate and does not impose monotonic descent.

\paragraph{Scale normalization and deployment precision.}
The objective is invariant under a common rescaling within each group. After the last Adam update, we exploit this freedom by setting
\begin{align}
 s_\ell&=\exp\!\left(\frac1m\sum_{i=1}^m
 \log\gamma_{W_\ell}^{(i)}\right),\nonumber\\
 \alpha_\ell^{(r)}&\leftarrow s_\ell\alpha_\ell^{(r)},\qquad
 \gamma_{W_\ell}^{(i)}\leftarrow\gamma_{W_\ell}^{(i)}/s_\ell,\qquad
 \gamma_{X_\ell}^{(j)}\leftarrow s_\ell\gamma_{X_\ell}^{(j)}.
 \label{eq:optimization-canonicalization}
\end{align}
This sets the geometric mean of the continuous weight scales to one while preserving both $W_\ell(i,r)/(\alpha_\ell^{(r)}\gamma_{W_\ell}^{(i)})$ and $\alpha_\ell^{(r)}X_\ell(r,j)/\gamma_{X_\ell}^{(j)}$. The normalization is applied once, after optimization. We then round the diagonal entries of $A$ to the weight dtype when the transformation is folded into a preceding normalization layer, or to BF16 when explicit runtime smoothing is required. The subsequent deterministic E4M3 weight-scale selection uses this rounded $A$. Rounding and discrete scale selection are not part of the Adam iterations.

\paragraph{FP32 tensor scales.}
The global tensor scales are distinct from the block scales optimized above and are not additional Adam parameters. For weights, the saved FP32 global scale is explicitly set to one. After the normalization in~\eqref{eq:optimization-canonicalization}, the weight-scale search selects E4M3 values directly, with the compensating scale already incorporated into $A$.

For W4A4 activations, the optimized $\Gamma_X$ is used only to guide the choice of $A$ and is discarded for deployment. After applying the final rounded diagonal transformation $A_\star$, we recompute the calibration maximum
\begin{align}
 M_X=\max_{1\leq j\leq k_1}\|A_\star x_j\|_\infty.
 \label{eq:optimization-activation-maximum}
\end{align}
This uses the full calibration statistics, not only the subset used in the activation loss. The maximum can be recovered exactly from stored per-channel absolute maxima, since the diagonal scaling is fixed. The saved FP32 global \emph{quantization multiplier} is
\begin{align}
 g_X=\frac{448\cdot6}{M_X},
 \qquad s_X=g_X^{-1}=\frac{M_X}{448\cdot6},
 \label{eq:optimization-activation-global-scale}
\end{align}
where $448$ and $6$ are the maximum positive E4M3 and E2M1 values, respectively, and $s_X$ denotes the corresponding dequantization scale. Numerical safeguards replace nonfinite multipliers by one. This global scale remains fixed during inference. The E4M3 block scales are computed dynamically from each actual group of $16$ smoothed activation entries using absmax-to-6 with this global multiplier. Weight-only configurations do not require activation quantization scales.

\begin{figure}[H]
  \centering
  \begin{subfigure}{0.95\linewidth}
    \centering
    \includegraphics[width=\linewidth]{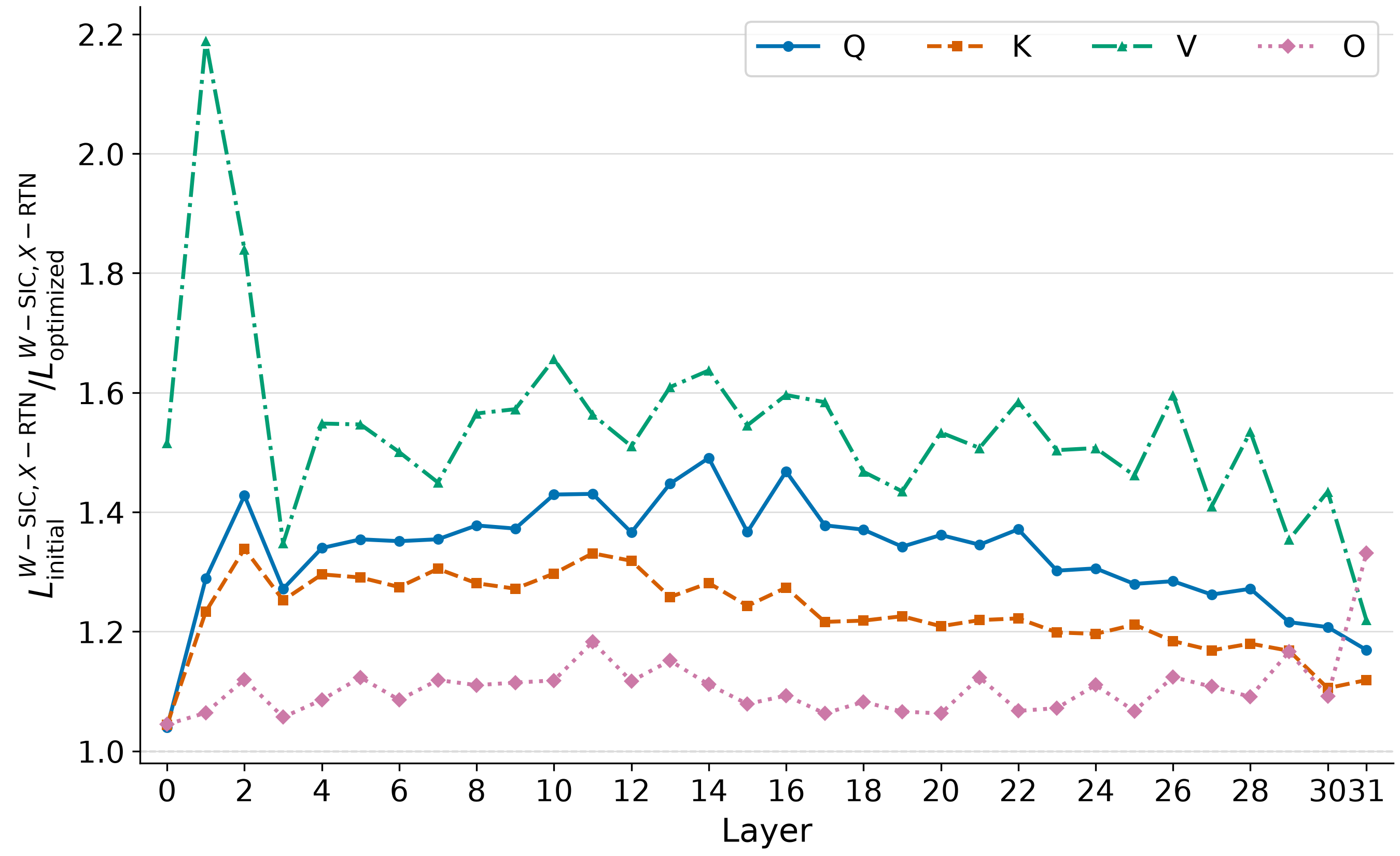}
    \caption{Loss improvement for attention projections.}
    \label{fig:Lhist_attention}
  \end{subfigure}
  \par\medskip
  \begin{subfigure}{0.95\linewidth}
    \centering
    \includegraphics[width=\linewidth]{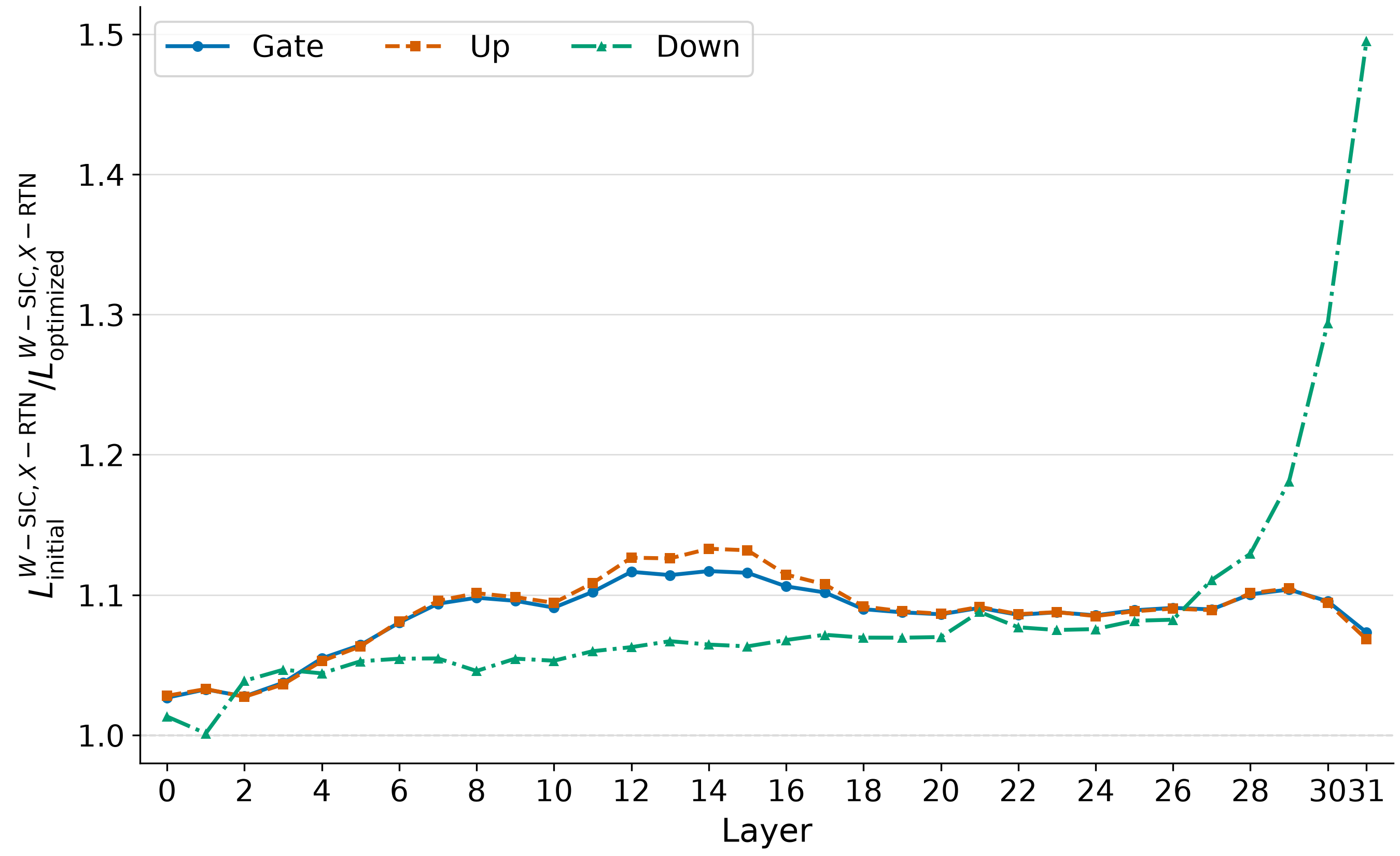}
    \caption{Loss improvement for MLP projections}
    \label{fig:Lhist_mlp}
  \end{subfigure}
  \caption{Ratio between $L^{W-\mathrm{SIC},X-\mathrm{RTN}}$ with default initialized values, and the optimized value, along the 32 layers of Llama-3.1-8B-instruct. The ratio is plotted for each one of the 7 projections.}
  \label{fig:Limprovement}
\end{figure}

\section{Detailed Fine-Grained E4M3 Scale-Selection Procedure}
\label{app:projection-mse}

After the continuous optimization and scale normalization in Appendix~\ref{app:scale-optimization}, we choose the final weight scales from the candidate sets $\Cscale^+(\gamma_{W_\ell}^{(i)})$ defined in the main text. For each normalized continuous scale $\gamma=\gamma_{W_\ell}^{(i)}$, the implementation searches representable E4M3 candidate scales $\gamma'$ in $[\gamma/1.2,\gamma/0.3]$. The rounded smoothing coefficients $\alpha_\ell^{(r)}$ are held fixed throughout the search.

Denote by $\Gamma^{\text{final}}_{W_\ell}=\diag(\gamma^{\text{final},(1)}_{W_\ell},....,\gamma^{\text{final},(m)}_{W_\ell})$ the final E4M3 scales used for representing $W_\ell$ in NVFP4 format.
For the $W-\mathrm{RTN}$ and $W-\mathrm{RTN}/X-\mathrm{RTN}$ setups we  set
\begin{align}
	\gamma^{\text{final},(i)}_{W_\ell}&=\argmin_{\gamma\in \Cscale^+(\gamma_{W_\ell}^{(i)})} \sum_{r=1}^{16}\sigma^2_{X_\ell}(r)\left(W_{\ell}(i,r)- \gamma\cdot\alpha_\ell^{(r)}\cdot Q_{\FP 4}\left(\frac{W_\ell(i,r)}{\gamma \cdot\alpha^{(r)}_\ell} \right)\right)^2,~~\forall i=1,\ldots,m,
	\label{eq:gammafinalRTN}
\end{align}
and then set
\begin{align}
	\hat{W}^{\FP 4}_\ell(i,:)=Q_{\FP 4}\left(\frac{W_\ell(i,:) A_\ell^{-1}}{\gamma^{\text{final},(i)}_{W_\ell}}\right)   . 
\end{align}

In the $W-\mathrm{SIC}$ and $W-\mathrm{SIC}/X-\mathrm{RTN}$ setups the dependence of the MSE on $\Gamma^{\text{final}}_{W_\ell}$ is more involved because of the feedback nature of the SIC quantization algorithm. Under RTN, the choice of $\gamma$ affects the quantization errors corresponding to $16$ entries in the corresponding block in an independent manner, and is therefore optimized by~\eqref{eq:gammafinalRTN}. On the other hand, under SIC, errors in quantizing earlier coordinates propagate to the quantization of future coordinates by the SIC feedback loop. Consequently, optimizing $\left\{\Gamma^{\text{final}}_{W_\ell}\right\}_{\ell=1}^L$ is less straightforward. We use Algorithm~\ref{alg:NVFP4SICwithE4M3Optimization} in order to choose $\left\{\Gamma^{\text{final}}_{W_\ell}\right\}_{\ell=1}^L$, and quantize $W$ in the same run. The main challenge in choosing $\gamma$ is that we do not know what will be the input to the quantizer before the feedback quantization procedure was executed. Algorithm~\ref{alg:NVFP4SICwithE4M3Optimization}  chooses the scale $\gamma$ block by block, and relies on the decisions on $\hat{W}$ that were already committed in the previous blocks. When optimizing the scale for block $\ell$, it further \emph{estimates} the effect of intra-block feedback-filtered quantization by approximating $\hat{W}_\ell=W_\ell$ only for the current block.

\begin{algorithm}[H]
	\caption{NVFP4 SIC with Optimized E4M3}
	\label{alg:NVFP4SICwithE4M3Optimization}
	\begin{algorithmic}
		\State \textbf{Inputs:} $Y\in\RR^{n\times m}$, upper triangular $U\in\RR^{n\times n}$,  diagonal matrices $\{A_{\ell}\}_{\ell=1}^{L=n/16}$ with $A_{\ell}=\diag\left(\alpha^{(1)}_{\ell},\ldots,\alpha^{(16)}_{\ell} \right)$,
		and initial diagonal scaling matrices $\{\Gamma_{W_\ell}\}_{\ell=1}^{L=n/16}$ with $\Gamma_{W_\ell}=\diag\left(\gamma^{(1)}_{W_\ell},\ldots,\gamma^{(m)}_{W_\ell} \right)$
		\State \textbf{Outputs:} $W_{\mathrm{SIC}}\in \Cfp^{n\times m}$ and scale matrices $\{\Gamma^{\text{final}}_{W_\ell}\}_{\ell=1}^{L=n/16}$
		\vspace{2mm}
		\State $Y^{(n)}\gets Y$
		\For{$\ell=L:1$}
		\State $\m{I}_\ell=\{1+16(\ell-1),\ldots,16\ell\}$ \Comment{The set of $16$ indices participating in $\ell$th block}
		\State $U_{\mathrm{block}}=U(\m{I}_\ell,\m{I}_\ell)$ \Comment{The $\ell$th $16\times 16$ block in the diagonal of $U$}
		\State $U^{\mathrm{causal}}_{\mathrm{block}}=U_{\mathrm{block}}-\diag(U_{\mathrm{block}})$ \Comment{Making $U_{\mathrm{block}}$ strictly upper triangular}
		\State $Y_{\mathrm{block}}\gets Y^{(16\ell)}(\m{I}_\ell,:)$
		\State $Y_{\mathrm{block}}^{\mathrm{approx}}\gets Y_{\mathrm{block}}-U^{\mathrm{causal}}_{\mathrm{block}}W^\top_\ell$ \Comment{Approximating input to the quantizer before applying feedback iteration}
		\For{$i=1:m$}
		\State $\gamma^{\text{final},(i)}_{W_\ell}\gets \argmin_{\gamma\in \Cscale^+(\gamma_{W_\ell}^{(i)})}$\State $~~~~~~~~~~~~~~~\sum_{a=1}^{16}\left(Y_{\mathrm{block}}^{\mathrm{approx}}(a,i)- \gamma\alpha_\ell^{(a)} U_{\mathrm{block}}(a,a)\cdot Q_{\FP 4}\left(\frac{Y_{\mathrm{block}}^{\mathrm{approx}}(a,i)}{\gamma \alpha^{(a)}_\ell U_{\mathrm{block}}(a,a)} \right)\right)^2$
		\EndFor
		\State $\Gamma^{\text{final}}_{W_\ell}=\diag(\gamma^{\text{final},(1)}_{W_\ell},....,\gamma^{\text{final},(m)}_{W_\ell})$
		\For{$r=16:1$}
		\State $t\gets 16(\ell-1)+r$
		\State $W_{\mathrm{SIC}}(t,:)\gets  Q_{\mathrm{FP4}}\left(\frac{Y^{(t)}(t,:)\left(\Gamma^{\text{final}}_{W_\ell}\right)^{-1}}{\alpha^{(r)}_\ell  U(t,t)}\right)$\Comment{For optimization use $\Qt$, for actual quantization use $Q_{\mathrm{FP4}}$}
		\State $Y^{(t-1)}\gets Y^{(t)}-\alpha_\ell^{(r)}U(:,t)\cdot W_{\mathrm{SIC}}(t,:)\Gamma^{\text{final}}_{W_\ell}$\Comment{$U(:,t)$ is the $t$th column of $U$}
		\EndFor
		\EndFor
	\end{algorithmic}
\end{algorithm}

\clearpage
\section{Native NVFP4 Comparison: Experimental Setup}
\label{app:native-benchmark}

Table~\ref{tab:native-nvfp4-comparison} gives the full task results underlying Table~\ref{tab:accuracy-summary}. All 18 rows share one evaluation protocol. We distinguish W4A4 methods using absmax activation scaling, W4A4 methods searching activation scales online, and weight-only W4A16 methods. Bold entries indicate the best score within each group. 

\begin{table}[H]
\caption{Native NVFP4 accuracy on Llama-3.1-8B-Instruct under the common protocol in Appendix~\ref{app:native-benchmark}. All entries are percentages; recovery is relative to the measured BF16 baseline. Bold indicates the best score within each group. FP-Quant rows use author-provided quantization with our native serving adapter; H-Scale W4A4 is our extension. $^\dagger$NVIDIA's separately released checkpoint has uncontrolled preparation. The online-search group includes 4over6, which selects activation scales by reconstruction error during inference. OSFP4 activation-ScaleSweep rows retain their corresponding X-RTN weights, $A$, and tensor scales.}
\centering
\setlength{\tabcolsep}{3pt}
\renewcommand{\arraystretch}{1.08}
\resizebox{\linewidth}{!}{\begin{tabular}{@{}llrrrrrr@{}}
\toprule
\textbf{Precision} & \textbf{Method} & \textbf{MMLU-CoT} & \textbf{GSM8K} & \textbf{HellaSwag} & \textbf{WinoGrande} & \textbf{Avg.} & \textbf{Recovery \%} \\
\midrule
Baseline & BF16 & 72.77 & 85.97 & 80.41 & 77.74 & \textbf{79.22} & \textbf{100.00} \\
\midrule
\multicolumn{8}{@{}l}{\textit{Weights and activations: absmax activation scaling}} \\
W4A4 & RTN (FP-Quant) & 68.61 & 80.29 & 78.87 & 74.82 & 75.65 & 95.48 \\
W4A4 & GPTQ (FP-Quant) & 68.86 & 80.67 & 78.78 & \textbf{77.03} & 76.34 & 96.35 \\
W4A4 & MR-GPTQ (FP-Quant) & \textbf{69.84} & 81.20 & 78.16 & 75.53 & 76.18 & 96.16 \\
W4A4 & SOAR & 69.14 & 82.11 & 78.85 & 76.32 & 76.60 & 96.69 \\
W4A4 & H-Scale (W4A4 extension) & 69.74 & 81.35 & 79.03 & 76.56 & 76.67 & 96.78 \\
W4A4 & NVIDIA released checkpoint$^\dagger$ & 68.49 & 80.89 & 78.75 & 76.32 & 76.11 & 96.07 \\
W4A4 & OSFP4 W-RTN / X-RTN & 69.21 & 82.03 & 78.83 & 75.37 & 76.36 & 96.39 \\
W4A4 & OSFP4 W-SIC / X-RTN & 69.73 & \textbf{82.34} & \textbf{79.09} & 76.95 & \textbf{77.03} & \textbf{97.23} \\
\midrule
\multicolumn{8}{@{}l}{\textit{Weights and activations: online activation-scale search}} \\
W4A4 & ScaleSweepMSE & 68.99 & 82.18 & 78.92 & \textbf{76.40} & 76.62 & 96.72 \\
W4A4 & ScaleSweepMSE + GPTQ & 69.53 & 82.94 & 78.51 & 75.77 & 76.69 & 96.80 \\
W4A4 & 4over6 & 69.56 & 82.11 & 78.56 & 75.14 & 76.34 & 96.36 \\
W4A4 & OSFP4 W-RTN / X-ScaleSweepMSE & \textbf{69.72} & \textbf{83.55} & \textbf{79.36} & 75.22 & 76.96 & 97.14 \\
W4A4 & OSFP4 W-SIC / X-ScaleSweepMSE & 69.56 & 83.40 & 79.10 & 76.32 & \textbf{77.09} & \textbf{97.31} \\
\midrule
\multicolumn{8}{@{}l}{\textit{Weights only}} \\
W4A16 & RTN & 70.69 & 83.85 & 79.36 & 76.80 & 77.67 & 98.04 \\
W4A16 & H-Scale & 71.06 & 83.85 & 79.88 & 77.19 & 78.00 & 98.45 \\
W4A16 & OSFP4 W-RTN & 71.13 & 83.47 & 79.77 & \textbf{78.22} & 78.15 & 98.64 \\
W4A16 & OSFP4 W-SIC & \textbf{71.53} & \textbf{84.15} & \textbf{80.01} & 77.98 & \textbf{78.42} & \textbf{98.98} \\
\bottomrule
\end{tabular}}
\label{tab:native-nvfp4-comparison}
\end{table}

\subsection{Model and calibration}
All locally prepared models start from the same BF16 Llama-3.1-8B-Instruct checkpoint and use the same calibration set: 1024 sequences of 2048 tokens sampled from FineWeb-Edu. All seven linear projections in each decoder layer are quantized. Embeddings, normalization parameters, the language-model head, and KV caches remain in BF16. NVFP4 uses E2M1 values with one E4M3 block scale per 16 input channels and an FP32 tensor scale. W4A4 also quantizes activations in groups of 16, whereas W4A16 retains BF16 activations. No fine-tuning or distillation is performed for the locally prepared models.

\subsection{Methods and implementation provenance}
\paragraph{Author-provided implementations.}
The rows labeled RTN (FP-Quant), GPTQ (FP-Quant), and MR-GPTQ (FP-Quant) use the authors' quantization implementation~\citep{microscaling}, run on our common checkpoint and calibration data. We convert the resulting models for native vLLM evaluation, preserving their quantized weights, scales, and transforms. These rows therefore use author-provided quantization with our serving integration, rather than the original paper's PyTorch evaluation backend. MR-GPTQ includes block Hadamard transforms and activation-based column ordering; GPTQ uses the original column order. All three use absmax activation scaling. SmoothQuant is omitted because the corresponding implementation from the FP-Quant authors was unavailable.

\paragraph{Our implementations of competing methods.}
The weight-only RTN control, ScaleSweepMSE, ScaleSweepMSE+GPTQ, 4over6, SOAR, and H-Scale rows use our own implementations of the respective procedures. ScaleSweepMSE~\citep{lin2026scalesweep} searches weight and activation block scales by reconstruction MSE; ScaleSweepMSE+GPTQ selects weight scales before the GPTQ feedback pass and holds them fixed during feedback. 4over6~\citep{cook2026fouroversix} chooses between absmax-to-4 and absmax-to-6 scales for both weights and activations. SOAR~\citep{bao2026soar} jointly refines tensor and block scales with separate encoding and decoding scales. H-Scale~\citep{yu2026hscale} selects weight scales using activation-weighted reconstruction error. SOAR and the H-Scale W4A4 extension use absmax activation scaling. H-Scale is originally a weight-only method; its W4A4 row is our extension, not a result reported by its authors. These independently implemented competitors use no rotations or column reordering.

\paragraph{OSFP4 and activation-scale ablations.}
The four OSFP4 configurations use the optimization and deterministic scale selection described in Appendices~\ref{app:scale-optimization} and~\ref{app:projection-mse}. The diagonal transformation is folded into preceding normalization layers where possible and applied explicitly before the attention output and MLP down projections otherwise. The two activation-ScaleSweepMSE ablations reuse the corresponding OSFP4 W4A4 checkpoints, changing only the online activation block-scale rule. Their weights, diagonal transformations, and tensor scales remain fixed, so these rows isolate the effect of activation-scale selection.

\paragraph{Released NVIDIA checkpoint.}
We also evaluate the separately released \texttt{nvidia/Llama-3.1-8B-Instruct-NVFP4} checkpoint under the same task protocol. Its calibration and preparation are outside our controlled procedure, so it serves as a released-model reference rather than an isolated comparison of quantization algorithms.

The publishable code package contains the exact asset identities, calibration sampling procedure, software environments, method settings, conversion and validation procedures, and saved results needed to reproduce the table.

\section{Projection-Level Distortion and Deterministic Scale Selection}
\label{app:projection-mse-experiment}

We compare the predicted randomized loss with measured projection errors for four configurations: $W$-RTN, $W$-SIC, $W$-RTN/$X$-RTN, and $W$-SIC/$X$-RTN. The first two retain unquantized activations (W4A16); the last two quantize both operands (W4A4). We use three fixed projections of Llama-3.1-8B-Instruct: the value projection in layer $12$, the attention output projection in layer $26$, and the MLP up-projection in layer $5$, with zero-based layer indices. Each configuration and projection is optimized independently. The measurements concern local projection error on inputs captured from the BF16 model, rather than error propagated through an end-to-end quantized model.

\paragraph{Empirical activation distribution.}
We use the checkpoint and FineWeb-Edu calibration-token cache specified in Appendix~\ref{app:native-benchmark}. From the $1024$ cached document windows of length $2048$, we select $128$ distinct windows uniformly without replacement and $256$ distinct token positions within each window. Causal BF16 forward passes, without a chat template or generated-token decoding, provide $N=32768$ activation vectors $x_t\in\mathbb R^n$ for each projection. Here $n=4096$ and the output dimensions are $m=1024$, $4096$, and $14336$ for $12v$, $26o$, and $5\mathrm{up}$, respectively. The prompt/token pairs are identical across all configurations and projections. We regard $x$ as uniform on this empirical population and use the same population for optimization and measurement; this is a calibration diagnostic, not a held-out generalization test.

\paragraph{Fitted parameters and comparisons.}
For each projection and configuration, we apply OSFP4 as described in Section~\ref{sec:optimization} to obtain $A,\Gamma_W$ and, for W4A4, $\Gamma_X$. We then keep these fitted parameters fixed and compare one predicted-loss reference with three measured error distributions.
\paragraph{Predicted loss (dashed line).}
For each configuration we plot
\begin{equation}
 \overline L=\frac{L(A,\Gamma_W,\Gamma_X)}{Nm},
 \label{eq:projection-predicted-mse}
\end{equation}
where $L$ is the corresponding loss from~\eqref{eq:PrdictedLossWRTN}--\eqref{eq:PrdictedLossWSICXRTN}, with $\Gamma_X$ omitted for W4A16. This prediction uses the optimized continuous scales, before deterministic E4M3 scale selection, and is the same reference for all curves in a panel.

\paragraph{Deterministic quantization (blue and orange).}
Writing $\widehat W$ for the reconstruction of $WA^{-1}$ and $\widehat x_t$ for that of $Ax_t$, each histogram contains the per-activation output MSE
\begin{equation}
 d_t=\frac1m\|\widehat W\widehat x_t-Wx_t\|_2^2.
 \label{eq:projection-token-mse}
\end{equation}
The two curves test different scale-selection rules:
\begin{itemize}
 \item \textbf{W4A16:} Blue uses OSFP4's deterministic weight-scale selection; orange replaces it with absmax-to-6 weight scales. Both retain the same fitted $A$, tensor scale, and RTN or SIC rounding procedure, with $\widehat x_t=Ax_t$. For SIC, both curves use Algorithm~\ref{alg:NVFP4SICwithE4M3Optimization}, with its scale-selection step replaced by absmax-to-6 for the orange curve.
 
 \item \textbf{W4A4:} Both deterministic curves use the same OSFP4-quantized weights, including the final E4M3 weight scales selected by the RTN or SIC procedure. Blue uses ScaleSweepMSE activation block scales, whereas orange uses absmax-to-6. The diagonal transformation $A$ and all tensor scales remain fixed, so the comparison isolates activation block-scale selection.
\end{itemize}

\paragraph{Randomized pseudo-MSE (green).}
Using the fitted continuous scales, we apply the randomized quantizer to weights and, in W4A4, activations, retaining SIC feedback where applicable. To estimate the expectation over the random dither, we repeat this procedure $32$ times with fresh independent dithers, sharing each randomized weight matrix across all activation samples. For each activation, we average the output MSE over these repetitions. The green histogram shows the distribution of these averages across activations.

The full W-RTN/W4A16 comparison is shown in Figure~\ref{fig:projection-mse-rtn-full}; its layer $12$ value projection is shown separately in Figure~\ref{fig:projection-mse-panel} in the main text. The remaining configurations follow. Table~\ref{tab:projection-family-summary} shows that the mean randomized pseudo-MSE closely matches the predicted loss across all configurations and projections, with ratios of $0.96$--$1.01$. Deterministic OSFP4 achieves approximately half the predicted loss, with ratios of $0.48$--$0.53$ (using ScaleSweepMSE activation scales in W4A4).

\begin{figure}[H]
 \centering
 \includegraphics[width=\linewidth]{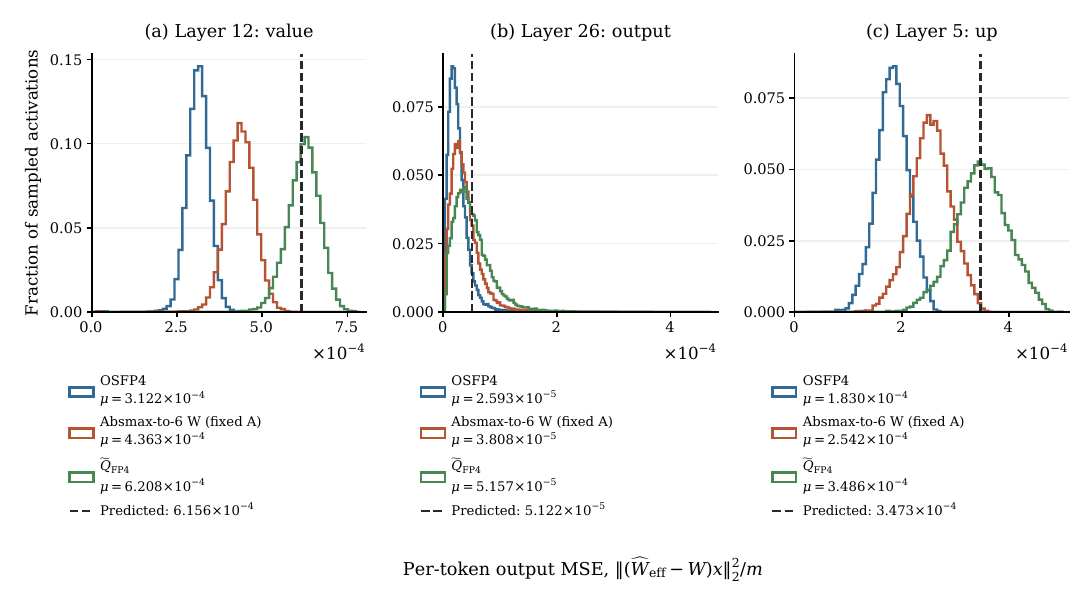}
 \caption{$W$-RTN (W4A16). Blue/orange: OSFP4/absmax-to-6 weight scales at fixed $A$. Green: randomized pseudo-MSE; dashed: predicted loss.}
 \label{fig:projection-mse-rtn-full}
\end{figure}

\begin{figure}[H]
  \centering
  \includegraphics[width=\linewidth]{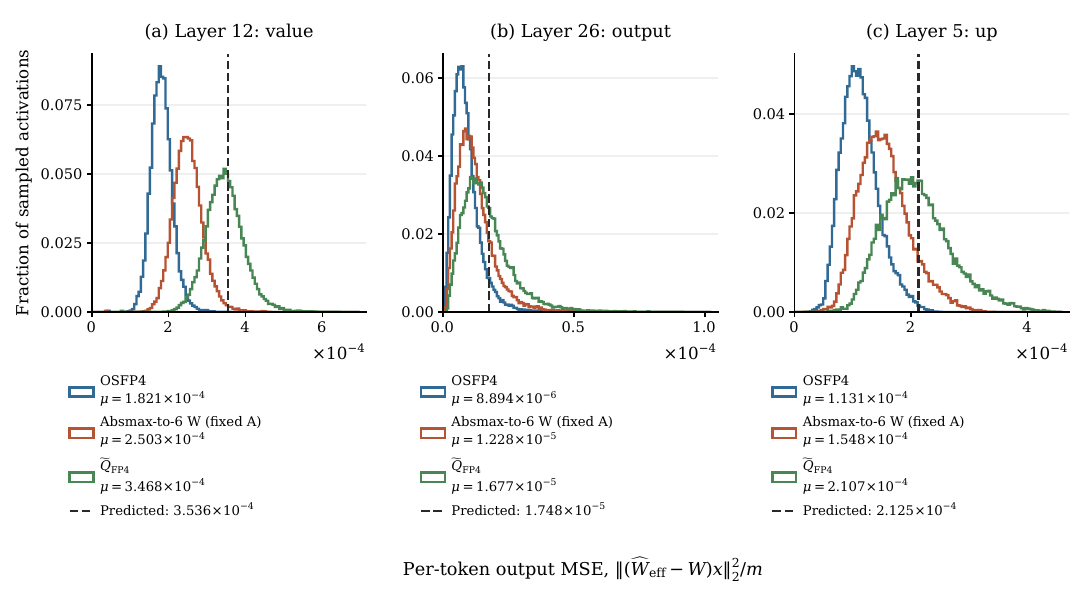}
  \caption{$W$-SIC (W4A16). Blue/orange: OSFP4/absmax-to-6 weight scales at fixed $A$. Green: randomized pseudo-MSE; dashed: predicted loss.}
  \label{fig:projection-mse-sic}

  \vspace{8pt}

  \includegraphics[width=\linewidth]{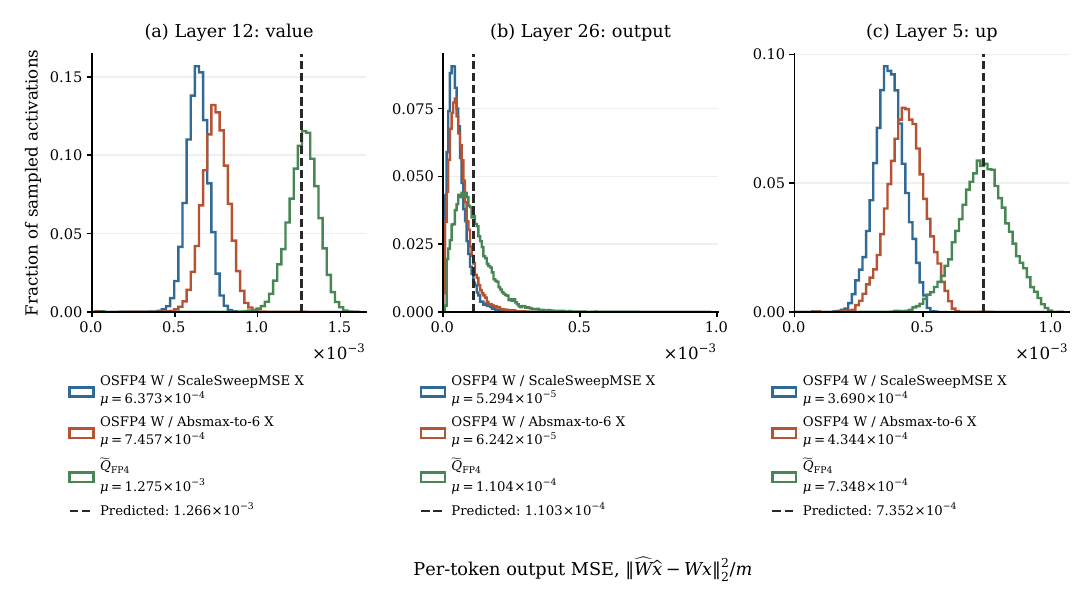}
  \caption{$W$-RTN/$X$-RTN (W4A4). Blue/orange: ScaleSweepMSE/absmax-to-6 activation scales with identical OSFP4 weights. Green: randomized pseudo-MSE; dashed: predicted loss.}
  \label{fig:projection-mse-rtn-a4}
\end{figure}

\begin{figure}[H]
 \centering
 \includegraphics[width=\linewidth]{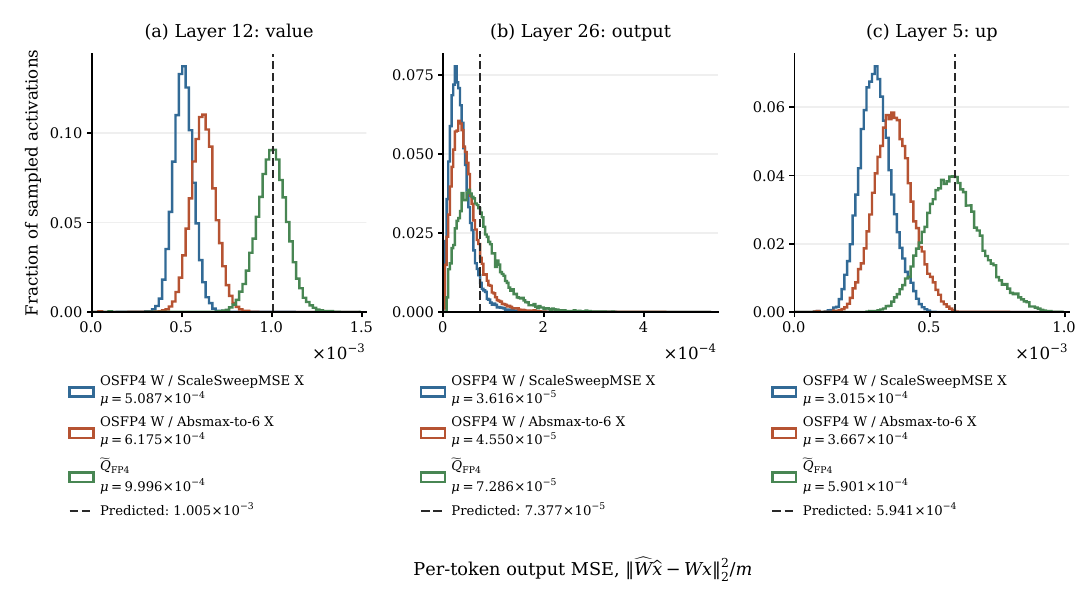}
 \caption{$W$-SIC/$X$-RTN (W4A4). Blue/orange: ScaleSweepMSE/absmax-to-6 activation scales with identical OSFP4 weights. Green: randomized pseudo-MSE; dashed: predicted loss.}
 \label{fig:projection-mse-sic-a4}
\end{figure}

\begin{table}[H]
\caption{Mean measured MSE relative to predicted loss. OSFP4 W4A4 uses ScaleSweepMSE activation scales.}
\centering
\footnotesize
\setlength{\tabcolsep}{3pt}
\renewcommand{\arraystretch}{1.08}
\begin{tabular}{@{}lrrrrrr@{}}
\toprule
& \multicolumn{3}{c}{\textbf{Randomized mean / prediction}} & \multicolumn{3}{c}{\textbf{OSFP4 mean / prediction}} \\
\textbf{Configuration} & 12v & 26o & 5up & 12v & 26o & 5up \\
\midrule
W-RTN & 1.0085 & 1.0068 & 1.0038 & 0.5072 & 0.5062 & 0.5267 \\
W-SIC & 0.9807 & 0.9595 & 0.9913 & 0.5149 & 0.5088 & 0.5320 \\
W-RTN/X-RTN & 1.0075 & 1.0008 & 0.9994 & 0.5035 & 0.4798 & 0.5019 \\
W-SIC/X-RTN & 0.9950 & 0.9876 & 0.9932 & 0.5063 & 0.4902 & 0.5076 \\
\bottomrule
\end{tabular}
\label{tab:projection-family-summary}
\end{table}

\clearpage
\raggedbottom
\section{Additional Models}
\label{app:additional-models}

This appendix reports additional model results.

\subsection{Llama-3-8B perplexity}

Published comparison results are taken from Table~2 of~\cite{savkin2025nestquant}. The OSFP4, GPTQ and RTN results are added here.

\begin{table}[H]
\setlength{\belowcaptionskip}{\baselineskip}
\caption{WikiText-2 perplexity comparison for Llama-3-8B.}
\label{tab:wikitext2-ppl}

\begin{center}
\small
\setlength{\tabcolsep}{4pt}
\renewcommand{\arraystretch}{1.08}

\makebox[\linewidth][c]{\begin{tabular}{@{}llc@{}}
\toprule
\textbf{Precision} & \textbf{Method} &
\multicolumn{1}{c}{\textbf{Llama-3-8B}} \\
\midrule

Baseline & BF16 & 6.14 \\

\midrule

W4A16 & NestQuant & \textbf{6.31} \\
& NestQuantM & 6.35 \\
& RTN & 6.54 \\
& GPTQ & 6.37 \\
& \(W\)-RTN & 6.43 \\
& \(W\)-SIC & \textbf{6.31} \\

\midrule

W4A4 & SpinQuant & 7.10 \\
& OstQuant & 7.24 \\
& DuQuant & 8.06 \\
& RTN & 6.88 \\
& GPTQ & 6.68 \\
& \(W\)-RTN/\(X\)-RTN & 6.74 \\
& \(W\)-SIC/\(X\)-RTN & \textbf{6.59} \\

\midrule

W4A4KV4 & NestQuant & \textbf{6.63} \\
& NestQuantM & 6.82 \\

\bottomrule
\end{tabular}
}
\end{center}
\end{table}

\subsection{Qwen3-8B accuracy}
\begin{table}[H]
\setlength{\belowcaptionskip}{\baselineskip}
\caption{Evaluation comparison on Qwen3-8B. All quantized configurations use the NVFP4 format.}
\label{tab:qwen3-8b-fpquant-osfp4-comparison}

\begin{center}
\small
\setlength{\tabcolsep}{4pt}
\renewcommand{\arraystretch}{1.08}

\resizebox{\linewidth}{!}{\begin{tabular}{@{}ll*{6}{c}@{}}
\toprule
\textbf{Precision} & \textbf{Method} & \textbf{MMLU-CoT} & \textbf{GSM8K} & \textbf{HellaSwag} & \textbf{WinoGrande} & \textbf{Avg.} & \textbf{Recovery \%} \\
\midrule
Baseline & BF16 & 71.64 & 90.98 & 76.60 & 70.64 & \textbf{77.46} & \textbf{100.00} \\
\midrule
\multicolumn{8}{@{}l}{\textit{Weights and activations}} \\
W4A4 & RTN (FP-Quant) & 67.72 & 89.31 & 74.68 & 69.46 & 75.29 & 97.20 \\
W4A4 & GPTQ (FP-Quant) & 70.94 & 89.92 & 74.63 & 69.61 & 76.28 & 98.48 \\
W4A4 & MR-GPTQ (FP-Quant) & \textbf{73.60} & 89.76 & 74.92 & 68.43 & 76.68 & 98.99 \\
W4A4 & QAT+Had16 (FP-Quant) & 71.29 & 89.16 & 74.63 & \textbf{70.32} & 76.35 & 98.57 \\
W4A4 & OSFP4 W-RTN / X-RTN & 73.22 & \textbf{90.14} & 74.69 & 69.30 & \textbf{76.84} & \textbf{99.20} \\
W4A4 & OSFP4 W-SIC / X-RTN & 70.49 & 89.61 & \textbf{75.24} & 70.01 & 76.34 & 98.55 \\
\midrule
\multicolumn{8}{@{}l}{\textit{Weights only}} \\
W4A16 & OSFP4 W-RTN & \textbf{72.28} & \textbf{90.75} & 75.76 & \textbf{70.96} & \textbf{77.44} & \textbf{99.97} \\
W4A16 & OSFP4 W-SIC & 69.87 & 90.07 & \textbf{76.05} & 70.09 & 76.52 & 98.79 \\

\bottomrule
\end{tabular}
}
\end{center}
\end{table}

\newpage
\subsection{Qwen3-30B-A3B-Instruct accuracy}

For each MoE layer, we use expert-specific diagonal smoothing matrices rather
than sharing a single transformation across experts. During calibration, we
preserve the model's natural MoE routing, so each expert is calibrated only
on the tokens routed to it. At runtime, after routing,
\(A_{\mathrm{gate/up}}\) is applied to the shared input activations of the
selected expert's gate and up projections, while \(A_{\mathrm{down}}\) is
applied to the input activations of its down projection.

For OSFP4 W4A4, activation global scales would need to be computed from smoothed calibration activations separately for each expert’s shared gate/up input and down input. Inference would then require a vLLM plugin that selects the appropriate scale after routing. Because this route-aware serving path was not implemented in our experiments, we report only W4A16 results for this model.

For this evaluation, we measure every row once using the same
seven-task protocol. We evaluate C-Eval, LiveBench, MMLU-Redux,
AIME 2025, ARC-Challenge, BBH, and GPQA Diamond. Generation uses
temperature 0.7, top-\(p\) 0.8, top-\(k\) 20, repetition penalty 1.0,
presence penalty 1.5, and at most 32,768 output tokens. AIME 2025
uses 16 independent generations per problem. We report the
unweighted mean of the seven task scores.

\begin{table}[H]
\caption{Evaluation comparison on Qwen3-30B-A3B-Instruct.}
\label{tab:qwen3-30b-a3b-nvfp4}

\begin{center}
\small
\setlength{\tabcolsep}{4pt}
\renewcommand{\arraystretch}{1.08}

\resizebox{\linewidth}{!}{\begin{tabular}{@{}ll*{9}{c}@{}}
\toprule
\textbf{Precision} &
\textbf{Method} &
\textbf{C-Eval} &
\textbf{LiveBench} &
\textbf{MMLU-R} &
\textbf{AIME 2025} &
\textbf{ARC-C} &
\textbf{BBH} &
\textbf{GPQA-D} &
\textbf{Avg.} &
\textbf{Recovery \%} \\
\midrule
Baseline & BF16 &
86.97 & 70.41 & 88.19 & 63.96 & 95.14 & 90.77 & 55.05 &
\textbf{78.64} & \textbf{100.00} \\
\midrule
\multicolumn{11}{@{}l}{\textit{Weights only}} \\
W4A16 & RTN &
86.57 & 67.11 & 87.63 & 57.71 & \textbf{95.56} & 89.42 &
\textbf{61.11} & 77.87 & 99.02 \\
W4A16 & RTN + H-Scale &
86.37 & 68.22 & 87.67 & \textbf{61.46} & \textbf{95.56} &
89.54 & 57.07 & 77.98 & 99.16 \\
W4A16 & GPTQ + H-Scale &
86.22 & 67.93 & 87.72 & 59.58 & 95.22 & 90.06 & 55.56 &
77.47 & 98.51 \\
W4A16 & MR-GPTQ + H-Scale &
86.51 & 67.24 & \textbf{87.82} & 60.42 & 95.14 & 89.71 &
54.04 & 77.27 & 98.26 \\
W4A16 & OSFP4 W-RTN &
\textbf{86.70} & \textbf{69.44} & 87.61 & 60.21 & 95.14 &
\textbf{90.08} & 58.08 & \textbf{78.18} & \textbf{99.42} \\
W4A16 & OSFP4 W-SIC &
86.39 & 67.77 & 87.77 & 57.92 & 94.71 & 89.96 & 57.58 &
77.44 & 98.47 \\
\bottomrule
\end{tabular}
}
\end{center}
\end{table}

\subsection{Gemma-4-31B-IT accuracy}
\label{sec:gemma4-31b-it-accuracy}

For this evaluation, we use the no-thinking setting and seed counts
described in Red Hat AI's Gemma-4-31B-IT-NVFP4 model card on
Hugging Face as a reference for our local measurements.
Every row uses zero-shot, text-only prompts and the Gemma chat template with
thinking disabled. We generate responses with vLLM using a 32,768-token
context, temperature 1.0, top-\(p\) 0.95, and top-\(k\) 64. We score GPQA
Diamond (GPQA-D), MATH-500, and AIME 2025 with LightEval, and IFEval with
lm-eval's prompt-level strict metric. Scores are averaged over three seeds
for GPQA-D, MATH-500, and IFEval, and eight seeds for AIME 2025. The RTN
(RedHatAI) row uses RedHatAI's publicly released NVFP4 checkpoint. All
table entries, including BF16, are our local measurements.

\begin{table}[H]
\caption{Evaluation comparison on Gemma-4-31B-IT.}
\label{tab:gemma4-31b-nvfp4}

\begin{center}
\small
\setlength{\tabcolsep}{4pt}
\renewcommand{\arraystretch}{1.08}

\resizebox{\linewidth}{!}{\begin{tabular}{@{}ll*{6}{c}@{}}
\toprule
\textbf{Precision} &
\textbf{Method} &
\textbf{GPQA-D} &
\textbf{MATH-500} &
\textbf{IFEval strict} &
\textbf{AIME 2025} &
\textbf{Avg.} &
\textbf{Recovery \%} \\
\midrule
Baseline & BF16
& 78.28 & 89.20 & 90.94 & 67.92 & 81.59 & \textbf{100.00} \\
\midrule
\multicolumn{8}{@{}l}{\textit{Weights and activations}} \\
W4A4 & RTN (RedHatAI)
& 76.60 & 88.07 & 90.14 & \textbf{64.17} & 79.74 & 97.74 \\
W4A4 & MR-GPTQ (FP-Quant)
& 74.58 & 88.20 & \textbf{91.00} & 63.75 & 79.38 & 97.30 \\
W4A4 & OSFP4 W-RTN / X-RTN
& 75.76 & 90.40 & 90.51 & 63.75 & 80.11 & 98.19 \\
W4A4 & OSFP4 W-SIC / X-RTN
& \textbf{78.28} & \textbf{91.87} & 90.76 & 62.08
& \textbf{80.75} & \textbf{98.97} \\
\bottomrule
\end{tabular}
}
\end{center}
\end{table}

\clearpage
\section{Inference Efficiency}
\label{app:efficiency}

Table~\ref{tab:osfp4-latency-sweep} compares the inference throughput of OSFP4, the vendor NVFP4 RTN baseline, NVFP4 with unit smoothing, and BF16 on Llama-3.1-8B-Instruct (NVIDIA B200, vLLM 0.28.0). Each workload uses a fixed synthetic batch of random-token prompts. Prefill emits one token and measures time to first token. Decode is measured with 256 output tokens and with one output token; subtracting the two durations removes the common prefill contribution. For each method and workload, we perform three independent timing repetitions, each consisting of 30 inference runs. The smoothing transformation $A$ is applied at runtime only on \texttt{o\_proj} and \texttt{down\_proj}.

End-to-end inference includes attention with a BF16 KV cache, activation quantization, output writes, and scheduling overhead in addition to FP4 matrix multiplication. The measured gains therefore depend on the workload. OSFP4 closely matches the unit-smoothing control across the sweep, suggesting that its gap to the vendor NVFP4 path primarily reflects the runtime smoothing implementation rather than the values of $A$.

\begin{table}[H]
\caption{Inference throughput on Llama-3.1-8B-Instruct (NVIDIA B200).}
\centering
\small
\setlength{\tabcolsep}{3pt}
\renewcommand{\arraystretch}{1.08}
\begin{tabular}{@{}rrrrrrr@{}}
\toprule
\multicolumn{3}{c}{\textbf{Workload}} &
\multicolumn{4}{c}{\textbf{Throughput}} \\
\cmidrule(lr){1-3}\cmidrule(lr){4-7}
\textbf{Batch} & \textbf{Prompt} & \textbf{Out} &
\textbf{BF16} & \textbf{NVFP4} &
\textbf{NVFP4} $A{=}I$ & \textbf{OSFP4} \\
\midrule
\multicolumn{7}{@{}l}{\textit{Prefill: input throughput (thousand tokens/s), measured to first output token}} \\
 1 &  512 & 1 & 60.2 &  81.0 &  78.1 &  78.4 \\
 8 &  512 & 1 & 82.5 & 177.0 & 165.4 & 166.5 \\
32 &  512 & 1 & 86.5 & 192.4 & 180.8 & 180.6 \\
 1 & 2048 & 1 & 77.0 & 153.1 & 145.0 & 144.9 \\
 8 & 2048 & 1 & 85.0 & 183.9 & 173.7 & 174.0 \\
 1 & 8192 & 1 & 75.0 & 144.6 & 137.8 & 138.0 \\
\midrule
\multicolumn{7}{@{}l}{\textit{Decode: output throughput across the batch (tokens/s), excluding prefill}} \\
 1 & 128 & 256 &  274 &  321 &  312 &  312 \\
 8 & 128 & 256 & 2183 & 2627 & 2548 & 2547 \\
32 & 128 & 256 & 8448 & 9936 & 9664 & 9642 \\
\bottomrule
\end{tabular}
\label{tab:osfp4-latency-sweep}
\end{table}

\clearpage
\section{Component Ablation}
\label{app:component-ablation}
We conduct an ablation study on Llama-3.1-8B-Instruct under W4A16 and W4A4 quantization. All configurations are calibrated on $1{,}024$ FineWeb sequences of $2{,}048$ tokens. Perplexity and projection-output errors are evaluated on the WikiText-2 test set.

For W4A16, we compare \emph{AbsMax}, \emph{optimal E4M3 scale
according to unweighted MSE}, \emph{our \(W\)-RTN method},
\emph{\(W\)-SIC with \(A=I\) and standard AbsMax weight scales}, and
\emph{our \(W\)-SIC method}.

For W4A4, we compare \emph{AbsMax}, \emph{optimal E4M3 scale for
\(W\), with \(A=I\) and standard AbsMax scaling for \(X\)},
\emph{our \(W\)-RTN method}, \emph{\(W\)-SIC with \(A=I\) and
standard AbsMax weight and activation scales}, and
\emph{our \(W\)-SIC method}.

For each method, we report the relative MatMul MSE for the seven projection types $p\in\{q,k,v,o,\mathrm{gate},\mathrm{up},\mathrm{down}\}$. Let $Y_{t,p}$ denote the full-precision output of projection $p$ in Transformer layer $t$, and let $\hat{Y}_{t,p}^{\mathrm{method}}$ and $\hat{Y}_{t,p}^{\mathrm{FP8~RTN}}$ denote the corresponding outputs produced by the evaluated method and the FP8 RTN reference, respectively. We compute the squared Frobenius-error ratio independently in each Transformer layer and report its mean over all $32$ layers:
\begin{align}
R_p
&=
\frac{1}{32}
\sum_{t=1}^{32}
\frac{
\left\|
\hat{Y}_{t,p}^{\mathrm{method}}-Y_{t,p}
\right\|_F^2
}{
\left\|
\hat{Y}_{t,p}^{\mathrm{FP8~RTN}}-Y_{t,p}
\right\|_F^2
},
~~\text{for}~~
p\in\{q,k,v,o,\mathrm{gate},\mathrm{up},\mathrm{down}\}.
\label{eq:t3-relative-mmse}
\end{align}
The W4A16 configurations use a W8A16 FP8 RTN reference, whereas the W4A4 configurations use a W8A8 FP8 RTN reference. The table reports the seven relative MatMul MSE values and WikiText-2 perplexity for each configuration.

\begin{table}[H]
\caption{NVFP4 component ablation on Llama-3.1-8B-Instruct: relative projection MSE and WikiText-2 perplexity.}
\label{tab:llama31-8b-nvfp4-ablation}
\setlength{\belowcaptionskip}{\baselineskip}

\begin{center}
\small
\setlength{\tabcolsep}{3.5pt}
\renewcommand{\arraystretch}{1.08}

\makebox[\linewidth][c]{\begin{tabular}{@{}ll*{8}{c}@{}}
\toprule
\textbf{Precision} & \textbf{Method} &
\multicolumn{7}{c}{\textbf{Relative MMSE} \(\downarrow\)} &
\textbf{PPL} \(\downarrow\) \\
\cmidrule(lr){3-9}
& &
\(\boldsymbol{q}\) &
\(\boldsymbol{k}\) &
\(\boldsymbol{v}\) &
\(\boldsymbol{o}\) &
\textbf{gate} &
\textbf{up} &
\textbf{down} & \\
\midrule

Baseline & BF16
& 0.00 & 0.00 & 0.00 & 0.00 & 0.00 & 0.00 & 0.00 & \textbf{7.22} \\

\specialrule{1.2pt}{0.4em}{0.4em}
\multicolumn{10}{@{}l}{\textit{Weights only}} \\
\midrule

W8A16 & RTN
& 1.00 & 1.00 & 1.00 & 1.00 & 1.00 & 1.00 & 1.00 & \textbf{7.24} \\

W4A16 & RTN
& 10.27 & 8.68 & 14.30 & 12.59 & 12.22 & 12.64 & 12.52 & 7.55 \\

W4A16 & MSE
& 8.12 & 6.93 & 12.01 & 9.28 & 9.08 & 9.39 & 9.43 & 7.52 \\

W4A16 & \(W\)-RTN
& 5.29 & 4.42 & 8.53 & 8.16 & 8.48 & 8.95 & 8.10 & 7.42 \\

W4A16 & SIC
& 5.88 & 4.73 & 9.05 & 5.82 & 9.00 & 9.53 & 9.19 & 7.45 \\

W4A16 & \(W\)-SIC
& \textbf{3.63} & \textbf{2.89} & \textbf{6.08} &
\textbf{4.02} & \textbf{6.45} & \textbf{6.86} &
\textbf{6.29} & \textbf{7.39} \\

\specialrule{1.2pt}{0.4em}{0.4em}
\multicolumn{10}{@{}l}{\textit{Weights and activations}} \\
\midrule

W8A8 & RTN
& 1.00 & 1.00 & 1.00 & 1.00 & 1.00 & 1.00 & 1.00 & \textbf{7.27} \\

W4A4 & RTN
& 10.86 & 8.97 & 16.18 & 12.67 & 12.41 & 12.85 & 12.45 & 7.88 \\

W4A4 & MSE
& 9.76 & 8.09 & 15.00 & 10.96 & 10.80 & 11.20 & 10.81 & 7.85 \\

W4A4 & \(W\)-RTN/\(X\)-RTN
& 7.05 & 6.19 & 10.12 & 10.12 & 10.38 & 10.75 & 9.77 & 7.77 \\

W4A4 & SIC
& 8.52 & 6.88 & 13.54 & 9.25 & 10.75 & 11.28 & 10.68 & 7.77 \\

W4A4 & \(W\)-SIC/\(X\)-RTN
& \textbf{6.17} & \textbf{5.38} & \textbf{8.94} &
\textbf{8.03} & \textbf{9.33} & \textbf{9.72} &
\textbf{8.82} & \textbf{7.67} \\

\bottomrule
\end{tabular}}
\end{center}
\end{table}

\clearpage
\section{Detailed Loss Derivations and SIC Procedure}
\label{app:loss-derivations}

Denote by $\Cscale$ all the positive points in the FP8 E4M3 constellation, and recall the definition of $\Cfp$ from~\eqref{eq:CscaleDef}.

Let $W\in\RR^{m\times n}$ and $X\in\RR^{n\times k}$. We generally think of $X$ as a random vector with $k=1$, but for optimization of the diagonal scaling matrix $A$, we will denote by $X$ the matrix holding $k$ calibration activations in $\RR^n$, so that $k>1$. We also define $\Sigma_X=\frac{1}{k}X X^\top\in\RR^{n\times n}$. Let $L=n/16$ and assume for simplicity that $L$ is an integer. 

We partition $W$ to $L$ matrices of shape $m\times 16$ so that
\begin{align}
W=[W_1|\cdots|W_L],~~~W_\ell=W(:,1+16(\ell-1):16\ell)~\text{for}~\ell=1,\ldots,L.    
\end{align}
Under NVFP4 quantization we represent each $W_\ell$ as
\begin{align}
\hat{W}_\ell=\diag(\Gamma_{W_\ell})\cdot\hat{W}_\ell^{\FP4},~~\text{where}~\Gamma_{W_\ell}\in \Cscale^{m},  \hat{W}_\ell^{\FP4}\in\Cfp^{m\times 16}.  
\end{align}
Similarly,
\begin{align}
X=\left[\begin{array}{@{}c@{}}
X_1 \\
\hline
\vdots \\
\hline
X_L
\end{array}\right],~~~X_\ell=X(1+16(\ell-1):16\ell,:)~\text{for}~\ell=1,\ldots,L. 
\end{align}
When activations are also quantized, we represent each $X_\ell$ as
\begin{align}
\hat{X}_\ell=\hat{X}_\ell^{\FP4}\cdot\diag(\Gamma_{X_\ell}),~~\text{where}~\Gamma_{X_\ell}\in \Cscale^{k},  \hat{X}_\ell^{\FP4}\in\Cfp^{16\times k}.  
\end{align}
In this work we use a diagonal smoothing matrix $A\in\RR^{n\times n}$ so that we actually quantize $W A^{-1}$ and (possibly) $AX$. We keep using the notation $\{\hat{W}_\ell\}$ and $\{\hat{X}_{\ell}\}$ for the corresponding NVFP4 representation even when $A\neq I_n$, in which case $\hat{W}_\ell$ can be very far from $W_\ell$, and same for $X$. We also express the matrix $A$ as a block diagonal matrix with blocks of size $16\times 16$ so that
\begin{align}
A=
\begin{bmatrix}
A_1 & 0 & \cdots & 0 \\
0 & A_2 & \cdots & 0 \\
\vdots & \vdots & \ddots & \vdots \\
0 & 0 & \cdots & A_L
\end{bmatrix}.
\end{align}

\medskip

\textbf{The MatMul Quantization Error:} Let
\begin{align}
e&=\hat{W}\hat{X}-WX=\sum_{\ell=1}^L \left(\hat{W}_\ell\hat{X_\ell}-(W_\ell A_\ell^{-1}) (A_\ell X_\ell)\right)\\
&=\sum_{\ell=1}^L(W_\ell A_\ell^{-1}+e_{W_\ell}) (A_\ell X_\ell+e_{X_\ell})-\left((W_\ell A_\ell^{-1}) (A_\ell X_\ell)\right)\\
&=\sum_{\ell=1}^L e_{W_\ell}A_\ell X_\ell+W_\ell A_\ell^{-1}e_{X_\ell}+e_{W_\ell}e_{X_\ell}
\end{align}
where $e_{W_\ell}=\hat{W}_\ell-(W_\ell A_\ell^{-1})$ and $e_{X_\ell}=\hat{X}_\ell-(A_\ell X_\ell )$. Setting
\begin{align}
e_W=[e_{W_1}|\cdots|e_{W_L}],~~~e_X= \left[\begin{array}{@{}c@{}}
e_{X_1} \\
\hline
\vdots \\
\hline
e_{X_L}
\end{array}\right],    
\end{align}
we obtain
\begin{align}
e=e_W AX+W A^{-1}e_X + e_W e_X.    
\end{align}
We further neglect the third term which has much smaller energy than the sum of the first two under NVFP4 quantization, and approximate
\begin{align}
e\approx e_W AX+W A^{-1}e_X.
\label{derivation:eq:error1storder}
\end{align}

\medskip

\textbf{Randomized FP4 quantization error:} We assume the randomized quantizer $\Qt(\cdot)$ from~\eqref{eq:QtDef} is used. We model it as a channel with input $y\in \RR$ and output $\Qt(y)\in\RR$ satisfying
\begin{align}
\Qt(y)=y+\sqrt{y^2\phi(y)}Z,
\label{derivation:eq:qterrormodel}
\end{align}
where $Z$ is a random variable with zero mean and unit variance, statistically independent of everything. For a matrix $F\in\RR^{k_1\times k_2}$ the notation $\Qt(F)$ corresponds to applying the randomized quantizer $\Qt(\cdot)$ on each component of $F$.

\textbf{Round to nearest (RTN) quantization:} The simplest way to use the randomized quantizer is to feed to it the scaled signal, and multiply the output by the inverse scaling. This is referred to as RTN quantization.
Specifically, given $A,\Gamma_W$ under RTN quantization the entries of the weight matrix $W$ are quantized to
\begin{align}
\hat{W}_\ell^{\FP4}= \Qt\left(\diag(\Gamma_{W_\ell})^{-1}W_\ell A^{-1}_\ell\right),~~\text{so that}~~\hat{W}_\ell= \diag(\Gamma_{W_\ell})\Qt\left(\diag(\Gamma_{W_\ell})^{-1}W_\ell A^{-1}_\ell\right).
\end{align}
Denoting $\{\gamma^{(i)}_{W_{\ell}}\}_{i=1}^{m}$ and $\{\alpha_\ell^{(r)}\}_{r=1}^{16}$ for the diagonal entries of $\Gamma_{W_{\ell}}$ and $A_{\ell}$, respectively, we obtain
\begin{align}
e_{W_{\ell}}(i,r)=\hat{W}_\ell(i,r)-\frac{W_\ell(i,r)}{\alpha^{(r)}_\ell}=\sqrt{\left(\frac{W_\ell(i,r)}{\alpha^{(r)}_\ell}\right)^2\phi\left(\frac{W_\ell(i,r)}{\gamma_{W_\ell}^{(i)}\alpha^{(r)}_\ell} \right)} Z_\ell(i,r),
\end{align}
where $\{Z_\ell(i,r)\}$ are iid zero-mean, unit-variance random variables.

Let
\begin{align}
L^{\mathrm{RTN}}_{W_\ell}(\Gamma_{W_\ell},A_\ell)=\EE\|e_{W_\ell}A_\ell X_\ell\|_F^2,    
\end{align}
where $X_\ell,W_\ell,\Gamma_{W_\ell},A_\ell$ are assumed fixed and the expectation is only with respect to the randomization of $\Qt(\cdot)$, that is, with respect to the iid random variables $\{Z_\ell(i,r)\}$. It immediately follows that
\begin{align}
L^{\mathrm{RTN}}_{W_\ell}(\Gamma_{W_\ell},A_\ell)&=\sum_{i=1}^m\sum_{j=1}^k\sum_{r=1}^{16}W^2_{\ell}(i,r) X^2_{\ell}(r,j)\cdot\phi\left(\frac{W_\ell(i,r)}{\gamma_{W_\ell}^{(i)}\alpha^{(r)}_\ell} \right)\nonumber\\
&=k\sum_{r=1}^{16}\sigma^2_{X_\ell}(r)\sum_{i=1}^m W^2_{\ell}(i,r) \cdot\phi\left(\frac{W_\ell(i,r)}{\gamma_{W_\ell}^{(i)}\alpha^{(r)}_\ell} \right),
\label{derivation:eq:LWRTN}
\end{align}
where $\sigma^2_{X_\ell}(r)=\frac{1}{k}\sum_{j=1}^k X^2_\ell(r,j)$ (we also have $\sigma^2_{X_\ell}(r)=\Sigma_X(r+16(\ell-1),r+16(\ell-1))$).

We assume that the random variables $\{Z_\ell(\cdot,\cdot)\}$ corresponding to different blocks are also independent. This implies
\begin{align}
 L^{\mathrm{RTN}}_W=\EE\|e_W A X\|_F^2=\sum_{\ell=1}^L L^{\mathrm{RTN}}_{W_\ell}(\Gamma_{W_\ell},A_\ell).  
\end{align}

If activations are also quantized, applying RTN with the randomized FP4 quantizer gives
\begin{align}
\hat{X}_\ell^{\FP4}= \Qt\left(A_\ell X_\ell  \diag(\Gamma_{X_\ell})^{-1}\right),~~\text{so that}~~\hat{X}_\ell= \Qt\left(A_\ell X_\ell  \diag(\Gamma_{X_\ell})^{-1}\right)\diag(\Gamma_{X_\ell}),
\end{align}
and
\begin{align}
e_{X_{\ell}}(r,j)=\hat{X}_\ell(r,j)-\alpha^{(r)}_\ell X_\ell(r,j)=\sqrt{\left(\alpha^{(r)}_\ell X_\ell(r,j)\right)^2\phi\left(\frac{\alpha^{(r)}_\ell}{\gamma_{X_\ell}^{(j)}} X_\ell(r,j) \right)} \tilde{Z}_\ell(r,j),
\end{align}
where $\{\gamma^{(j)}_{X_{\ell}}\}_{j=1}^{k}$ denote the diagonal entries of $\Gamma_{X_{\ell}}$, and $\{\tilde{Z}_\ell(r,j)\}$ are iid zero-mean unit-variance random variables statistically independent of everything. Similar to the derivation above, we obtain

\begin{align}
L^{\mathrm{RTN}}_{X_\ell}(\Gamma_{X_\ell},A_\ell)=\EE\|W_\ell A^{-1}_\ell e_{X_\ell}\|_F^2=m\sum_{r=1}^{16}\sigma^2_{W_\ell}(r)\sum_{j=1}^k X^2_{\ell}(r,j) \cdot\phi\left(\frac{\alpha^{(r)}_\ell}{\gamma_{X_\ell}^{(j)}} X_\ell(r,j) \right) ,   
\label{derivation:eq:LXRTN}
\end{align}
where $\sigma^2_{W_\ell}(r)=\frac{1}{m}\sum_{i=1}^m W^2_{\ell}(i,r)$. We also have
\begin{align}
 L^{\mathrm{RTN}}_X=\EE\|W A^{-1} e_X\|_F^2=\sum_{\ell=1}^L L^{\mathrm{RTN}}_{X_\ell}(\Gamma_{X_\ell},A_\ell).      
\end{align}

\textbf{Successive interference cancellation (SIC):} Note that
\begin{align}
 \|e_W AX\|_F^2=\tr\left((e_W AX)(e_W AX)^\top\right)=k\tr\left(e_W A\Sigma_X A^\top e_W^\top \right).   
\end{align}
Let $U\in\RR^{n\times n}$ be an upper triangular matrix such that $\Sigma_X=U^\top U$, and denote 
\begin{align}
Y=UA^\top (WA^{-1})^\top=U W^\top\in \RR^{n\times m}.
\end{align}
We have
\begin{align}
 \|e_W AX\|_F^2=k\|U A^\top e_W^\top\|_F^2=k\|Y-UA\hat{W}^\top\|_F^2.   
 \label{derivation:eq:WMSEloss}
\end{align}
Even when $A$ and the E4M3 scaling matrices $\{\Gamma_{W_\ell}\}_{\ell=1}^{L}$ are fixed, the problem of choosing $\hat{W}^{\FP 4}\in \Cfp^{m\times n}$ such as to minimize~\eqref{derivation:eq:WMSEloss} is computationally difficult due to the coupling the matrix $UA$ introduces between different columns of $\hat{W}$ (equivalently, rows of $\hat{W}^\top$). As common in the literature, instead of attempting an optimal solution, we resort to the SIC/GPTQ approach. Taking into account the structure of the NVFP4 constellation, this boils down to choosing $W_{\mathrm{SIC}}\in \Cfp^{n\times m}$ as in Algorithm~\ref{alg:NVFP4SIC}, and then setting $\hat{W}^{\FP 4}=W^\top_{\mathrm{SIC}}$ and $\hat{W}_\ell=\Gamma_{W_\ell} \hat{W}_{\ell}^{\FP 4}$.

\begin{algorithm}[H]
\caption{NVFP4 SIC}
\label{alg:NVFP4SIC}
\begin{algorithmic}
\State \textbf{Inputs:} $Y\in\RR^{n\times m}$, upper triangular $U\in\RR^{n\times n}$,  diagonal matrices $\{A_{\ell}\}_{\ell=1}^{L=n/16}$ with $A_{\ell}=\diag\left(\alpha^{(1)}_{\ell},\ldots,\alpha^{(16)}_{\ell} \right)$,
 and E4M3 diagonal scaling matrices $\{\Gamma_{W_\ell}\}_{\ell=1}^{L=n/16}$ with $\Gamma_{W_\ell}=\diag\left(\gamma^{(1)}_{W_\ell},\ldots,\gamma^{(m)}_{W_\ell} \right)$
\State \textbf{Outputs:} $W_{\mathrm{SIC}}\in \Cfp^{n\times m}$
\vspace{2mm}
\State $Y^{(n)}\gets Y$
\For{$\ell=L:1$}
\For{$r=16:1$}
\State $t\gets 16(\ell-1)+r$
\State $W_{\mathrm{SIC}}(t,:)\gets  Q_{\mathrm{FP4}}\left(\frac{Y^{(t)}(t,:)\Gamma^{-1}_{W_\ell}}{\alpha^{(r)}_\ell  U(t,t)}\right)$\Comment{For optimization use $\Qt$, for actual quantization use $Q_{\mathrm{FP4}}$}
\State $Y^{(t-1)}\gets Y^{(t)}-\alpha_\ell^{(r)}U(:,t)\cdot W_{\mathrm{SIC}}(t,:)\Gamma_{W_\ell}$\Comment{$U(:,t)$ is the $t$th column of $U$}
\EndFor
\EndFor
\end{algorithmic}
\end{algorithm}

The analysis of $L_W^{\mathrm{SIC}}=k\EE\|Y-UA\hat{W}^\top\|_F^2$ under the SIC algorithm is a bit complicated, even under the randomized FP4 quantization error model~\eqref{derivation:eq:qterrormodel}. The reason is that $Z$ entries from time $t$ affect $Y^{(t')}$ for $t'<t$ due to the feedback nature of the quantization process. We therefore make a high-resolution quantization assumption and assume the energy of the (filtered) quantization noise is negligible next to the signal (entries of $Y$) energy. Under this assumption we have that
\begin{align}
Y^{(t)}(t,:)\approx U(t,t)\cdot (W(:,t))^\top.    
\end{align}
With this assumption a straightforward calculation gives (under the randomized FP4 quantization error model~\eqref{derivation:eq:qterrormodel}) for fixed $\{A_\ell\}$ and $\{\Gamma_{W_\ell}\}$
\begin{align}
L_W^{\mathrm{SIC}}=k\sum_{\ell=1}^L \sum_{r=1}^{16}U^2_\ell(r)\sum_{i=1}^m W^2_\ell(i,r)  \phi\left(\frac{W_\ell(i,r)}{\gamma_{W_\ell}^{(i)}\alpha^{(r)}_\ell} \right)=\sum_{\ell=1}^L L_W^{\mathrm{SIC}}(\Gamma_{W_\ell},A_\ell),  
\end{align}
where $U_\ell(r)=U(16(\ell-1)+r,16(\ell-1)+r)$ and
\begin{align}
L_W^{\mathrm{SIC}}(\Gamma_{W_\ell},A_\ell)= k \sum_{r=1}^{16}U^2_\ell(r)\sum_{i=1}^m W^2_\ell(i,r)  \phi\left(\frac{W_\ell(i,r)}{\gamma_{W_\ell}^{(i)}\alpha^{(r)}_\ell} \right).   
\label{derivation:eq:LWSIC}
\end{align}
Note that the only difference between $L_W^{\mathrm{SIC}}(\Gamma_{W_\ell},A_\ell)$ and $L_W^{\mathrm{RTN}}(\Gamma_{W_\ell},A_\ell)$ is replacing $\sigma^2_{X_\ell}(r)$ in the latter with $U^2_\ell(r)$ in the former. Since $U^2_\ell(r)$ is the variance of the optimal prediction error of $X(t)$ - the $t$th coordinate of $X\sim\mathcal{N}(0,\Sigma_X)$, $t=16(\ell-1)+r$ - from $X(t+1:n)$ and $\sigma^2_{X_\ell}(r)$ is the variance of $X(t)$, we have $U^2_\ell(r)\leq \sigma^2_{X_\ell}(r)$ for all $\ell,r$.

\textbf{Total quadratic Distortion:} Using~\eqref{derivation:eq:LWRTN}~\eqref{derivation:eq:LXRTN} and~\eqref{derivation:eq:LWSIC} we obtain the following loss functions for randomized NVFP4 quantization, as a function of $A$ and the E4M3 scales, under the various quantization setups:
\begin{align}
L^{W-\mathrm{RTN}}&=\sum_{\ell=1}^L L^{\mathrm{RTN}}_{W_\ell}(\Gamma_{W_\ell},A_\ell)\label{derivation:eq:PrdictedLossWRTN}\\
L^{W-\mathrm{SIC}}&=\sum_{\ell=1}^L L^{\mathrm{SIC}}_{W_\ell}(\Gamma_{W_\ell},A_\ell)\label{derivation:eq:PrdictedLossWSIC}\\
L^{W-\mathrm{RTN},X-\mathrm{RTN}}&=\sum_{\ell=1}^L \left[L^{\mathrm{RTN}}_{W_\ell}(\Gamma_{W_\ell},A_\ell)+L^{\mathrm{RTN}}_{X_\ell}(\Gamma_{X_\ell},A_\ell)\right]\label{derivation:eq:PrdictedLossWRTNXRTN}\\
L^{W-\mathrm{SIC},X-\mathrm{RTN}}&=\sum_{\ell=1}^L \left[L^{\mathrm{SIC}}_{W_\ell}(\Gamma_{W_\ell},A_\ell)+L^{\mathrm{RTN}}_{X_\ell}(\Gamma_{X_\ell},A_\ell)\right].
\label{derivation:eq:PrdictedLossWSICXRTN}
\end{align}
Note that in all expressions the dependency on $A$ and the E4M3 scales \emph{completely decouples to $L$ different functions, that depend on different parameters}. Therefore, the optimization on $\{A_\ell,\Gamma_{W_\ell},\Gamma_{X_\ell}\}$ decouples to $L$ independent optimization problems.

\end{document}